
\typeout{Multi-Agent Path Finding with Deadlines}


\documentclass{article}
\pdfpagewidth=8.5in
\pdfpageheight=11in
\usepackage{ijcai18}

\usepackage{times}
\usepackage{xcolor}
\usepackage{soul}
\usepackage[utf8]{inputenc}
\usepackage[font=small]{caption}


\makeatletter
\newcommand\notsotiny{\@setfontsize\notsotiny{8}{9}}
\makeatother

\usepackage{amssymb}
\usepackage{amsmath}
\usepackage{bm}

\usepackage{amsthm}
\usepackage{bbm}
\usepackage{graphicx}
\usepackage{subcaption}
\usepackage{url}
\usepackage{threeparttable}
\usepackage{multirow}
\usepackage{relsize}
\usepackage[linesnumbered,vlined,ruled]{algorithm2e}
\SetAlFnt{\notsotiny}
\SetAlCapFnt{\normalfont\small}
\SetAlCapNameFnt{\small}
\SetArgSty{textrm}
\SetInd{0.1em}{0.6em}
\usepackage{setspace}
\SetKwProg{Fn}{Function}{}{}
\usepackage{tabularx, booktabs}

\usepackage{xpatch}

\makeatletter
\def\thm@space@setup{%
  \thm@preskip=\parskip \thm@postskip=0pt
}
\makeatother

%

\theoremstyle{plain}

\newtheorem{thm}{Theorem}
\newtheorem{lem}{Lemma}

\theoremstyle{definition}


\makeatletter
\renewenvironment{proof}[1][\proofname]{\par
  \pushQED{\qed}%
  \normalfont
  \topsep0pt \partopsep0pt 
  \trivlist
  \item[\hskip\labelsep
        \itshape
    #1\@addpunct{.}]\ignorespaces
}{%
  \popQED\endtrivlist\@endpefalse
}
\makeatother




\newcommand{\OPEN} {{\textsc{OPEN}}}




\title{Multi-Agent Path Finding with Deadlines \thanks{The research at the University of Southern California was supported by the National Science Foundation (NSF) under grant numbers 1724392, 1409987, and 1319966 as well as a gift from Amazon.
The research at Ben-Gurion University was supported by Israel Science Foundation under grant number 844/17 and the Israel Ministry of Science.}}


\author{
Hang Ma$^1$,
Glenn Wagner$^2$,
Ariel Felner$^3$,
Jiaoyang Li$^1$,
T. K. Satish Kumar$^1$,
Sven Koenig$^1$
\\
$^1$ University of Southern California\\
$^2$ CSIRO\\
$^3$ Ben-Gurion University\\
hangma@usc.edu, glenn.wagner@data61.csiro.au, felner@bgu.ac.il\\ jiaoyanl@usc.edu, tkskwork@gmail.com, skoenig@usc.edu
}

\begin{document}

\maketitle

\providecommand{\agent}[1]{a_{#1}} 
\providecommand{\config}{q}
\providecommand{\tstep}{t}
\providecommand{\tcrit}{T_{\emph{end}}}
\providecommand{\dtnode}[1]{\ensuremath{N_{#1}}}
\providecommand{\nodedead}[1]{\dtnode{#1}.\emph{dead}}
\providecommand{\nodelive}[1]{\dtnode{#1}.\emph{live}}
\providecommand{\nodecost}[1]{\dtnode{#1}.\emph{cost}}
\providecommand{\group}{\gamma}

\begin{abstract}  
We formalize Multi-Agent Path Finding with Deadlines (MAPF-DL). The objective is to maximize the number of agents that can reach their given goal vertices from their given start vertices within the deadline, without colliding with each other. We first show that MAPF-DL is NP-hard to solve optimally. We then present two classes of optimal algorithms, one based on a reduction of MAPF-DL to a flow problem and a subsequent compact integer linear programming formulation of the resulting reduced abstracted multi-commodity flow network and the other one based on novel combinatorial search algorithms. Our empirical results demonstrate that these MAPF-DL solvers scale well and each one dominates the other ones in different scenarios.
\end{abstract}

\section{Introduction}

In many robotics applications, for example, aircraft-towing vehicles \cite{airporttug16}, warehouse and office robots \cite{kiva,DBLP:conf/ijcai/VelosoBCR15}, game characters \cite{MaAIIDE17}, and other multi-robot systems \cite{MaIEEE17}, robots need to finish tasks that have deadlines. For example, in applications that require long-term autonomy for a team of robots, it is important to move as many robots as possible from a dangerous area to reach a shelter area before a disaster occurs in inclement or adversarial conditions.

One aspect of the problem, namely Multi-Agent Path Finding (MAPF), is to plan collision-free paths for multiple agents in known environments from their given start vertices to their given goal vertices \cite{MaAIMATTERS17}. The objective is to minimize the sum of the arrival times of the agents or the
makespan. MAPF is NP-hard to solve optimally \cite{YuLav13AAAI}
and even to approximate within a small constant factor for makespan
minimization \cite{MaAAAI16}. It can be solved with reductions to other
well-studied combinatorial problems
\cite{Surynek15,DBLP:conf/ecai/SurynekFSB16,YuLav13ICRA,erdem2013general} and dedicated optimal
\cite{ODA11,EPEJAIR,DBLP:journals/ai/SharonSGF13,MStar,DBLP:journals/ai/SharonSFS15,ICBS,FelnerICAPS18},
bounded-suboptimal \cite{ECBS,CohenUK16}, and suboptimal MAPF algorithms
\cite{WHCA,WHCA06,WangB11,PushAndSwap,PushAndRotate} as described in several surveys \cite{MaWOMPF16,SoCS2017Surv}.
MAPF has recently been generalized in different directions
\cite{MaAAMAS16,HoenigICAPS16,MaWOMPF16,HoenigIROS16,MaAAAI17,MaAAMAS17} but none of them capture an important characteristic of many applications, namely
the ability to meet deadlines. A MAPF variant, G-TAPF, assigns tasks with deadlines to agents but does not directly maximize the number of agents that can finish the tasks by the deadlines \cite{GTAPF}.

We thus formalize Multi-Agent Path Finding with Deadlines (MAPF-DL). The objective is to maximize the
number of agents that can reach their given goal vertices from their given
start vertices within a given deadline, without colliding with each other.
Since none of the existing results directly transfers to MAPF-DL, we first show that MAPF-DL is NP-hard to solve optimally. We then present two families of algorithms to solve MAPF-DL. The first family is based on a reduction of MAPF-DL to a flow problem and a subsequent compact integer linear programming formulation of the resulting reduced abstracted multi-commodity flow network. The second family is based on novel combinatorial search algorithms. We introduce three search-based MAPF-DL algorithms and conduct systematic experiments to compare them on a number of MAPF-DL instances. The results show that all algorithms scale well to large problem instances but each one dominates the other ones in different scenarios. We study their pros and cons and provide a set of guidelines for identifying when each one should be used.

\section{Multi-Agent Path Finding with Deadlines}

In this section, we define MAPF-DL formally and prove its computational hardness. We then present an optimal MAPF-DL algorithm based on integer linear programming (ILP).

\subsubsection{Problem Definition}

We formalize MAPF-DL as follows: We are given a \emph{deadline},
denoted by a time step $\tcrit$, a finite undirected graph $G = (V,E)$, and $M$
agents $\agent{1}, \agent{2} \ldots \agent{M}$. Each agent $\agent{i}$ has a
start vertex $s_i$ and a goal vertex $g_i$. In each time step, each agent
either moves to an adjacent vertex or stays at the same vertex. Each agent can reach its goal vertex in $\tcrit$ time steps in the absence of other agents (without loss of generality). Let $l_i(t)$ be the vertex occupied by agent $\agent{i}$ at time step $t \in
\{0\ldots\tcrit\}$. We call an agent $a_i$ \emph{successful} iff it occupies its goal vertex at the deadline $\tcrit$, that is, $l_i(\tcrit) = g_i$. A
\emph{plan} consists of a path $l_i$ assigned to each successful agent
$\agent{i}$ that satisfies the following conditions:
(1) $l_i(0) = s_i$ [each successful agent starts at its start vertex]. (2) $(l_i(t - 1), l_i(t))\in E$ or $l_i(t - 1) = l_i(t)$ [each successful agent always either moves to an adjacent vertex or does not move]. Each unsuccessful agent $\agent{i}$ is removed at time step zero, and the plan thus contains no path assigned to it, that is, $l_i = \emptyset$.\footnote{Depending on the application, the unsuccessful agents can be removed at time step zero, wait at their start vertices, or move out of the way of the successful agents. We choose the first option in this paper. If the unsuccessful agents are not removed, they can obstruct other agents. However, our proof of NP-hardness does not depend on this assumption, and our MAPF-DL algorithms can be adapted to other assumptions.} We define a \emph{collision} between two different successful agents $\agent{i}$ and $\agent{j}$ to be either a \emph{vertex collision} ($\agent{i}$, $\agent{j}$, $v$, $t$) iff $v = l_i(t) = l_j(t)$ [both successful agents occupy the same vertex simultaneously] or an \emph{edge collision} ($\agent{i}$, $\agent{j}$, $u$, $v$, $t$) iff $u = l_i(t) = l_j(t+1)$ and $v = l_j(t) = l_i(t+1)$ [both successful agents traverse the same edge simultaneously in opposite directions]. A solution is a plan without collisions.

The objective of MAPF-DL is to maximize the number of successful agents $M_{\emph{succ}} = \left\vert\{ \agent{i} | l_i(\tcrit) = g_i\}\right\vert$, that is, the number of paths in the solution, or, equivalently, minimize the number of unsuccessful agents $M_{\emph{unsucc}} = M - M_{\emph{succ}}$. The cost of a plan is thus the number of unsuccessful agents $M_{\emph{unsucc}}$. It can also be defined as the sum of the costs of all agents since a Boolean cost can be defined for each agent where each successful agent incurs cost 0 and each unsuccessful agent incurs cost 1. Obviously, every MAPF-DL instance has a trivial solution where all agents are unsuccessful, namely with cost $M_{\emph{unsucc}} = M$.

%

\subsubsection{Intractability}

\begin{thm}
It is NP-hard to compute a MAPF-DL solution with the maximum number of successful agents.
\end{thm}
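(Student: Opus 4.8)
The plan is to prove NP-hardness by reducing from the decision version of makespan-bounded MAPF, which is NP-hard \cite{YuLav13AAAI,MaAAAI16}: given an undirected graph $G$, $M$ agents with start vertices $s_i$ and goal vertices $g_i$, and a bound $T$, decide whether there exist collision-free paths (with the usual vertex and edge-swap collisions) along which every agent occupies its goal vertex at time step $T$. The first step is to observe that this is exactly the ``all agents successful'' question for MAPF-DL. Indeed, in the MAPF-DL model no agent is removed when all $M$ agents are successful, so a plan in which all agents are successful with deadline $\tcrit := T$ is precisely a set of collision-free paths with $l_i(0) = s_i$ and $l_i(T) = g_i$; since such paths force the goal vertices to be pairwise distinct (two agents sharing a goal would collide at time $T$), the plan extends to a MAPF solution of makespan at most $T$ by having every agent wait at its goal after time $T$, and conversely any MAPF solution of makespan at most $T$ yields such a plan by truncation. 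Hence a MAPF-DL instance with deadline $T$ satisfies $M_{\emph{succ}} = M$ if and only if the corresponding MAPF instance admits a solution of makespan at most $T$.

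The reduction is then immediate: from the MAPF instance and bound $T$, build the MAPF-DL instance with the same graph, agents, start and goal vertices, and deadline $\tcrit := T$; this takes polynomial time. Any algorithm that computes the maximum number of successful agents can be invoked once and its output compared with $M$, which decides makespan-bounded MAPF. Since the latter is NP-hard, computing an optimal MAPF-DL solution is NP-hard.

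The one place that needs care --- and the part I expect to be the main obstacle --- is the modelling assumption that every agent can reach its goal within $\tcrit$ time steps in the absence of the others, which the above reduction does not automatically guarantee. I would argue this assumption is genuinely without loss of generality: an agent whose shortest-path distance to its goal exceeds $T$ can never be successful, so it can be deleted (decrementing $M$) after a single breadth-first search, after which the assumption holds; equivalently, restricting the source problem to MAPF instances satisfying the property keeps it NP-hard, since the remaining instances are trivial ``no'' instances recognizable in polynomial time. A secondary check is that the collision model of MAPF-DL (vertex collisions and edge-swap collisions, with no following conflicts) matches the one used in the cited hardness result, so that ``collision-free'' means the same on both sides. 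If one wants to sidestep any such dependency, the alternative is a direct, self-contained reduction from 3-SAT in which a tight deadline forces each agent to commit to one of two shortest routes, encoding a truth assignment, while narrow corridors enforce the clauses --- essentially a MAPF-style hardness gadget tuned to the MAPF-DL objective.
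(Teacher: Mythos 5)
Your proposal is correct, but it takes a more modular route than the paper. The paper gives a direct reduction from the \textbf{$\le$3,$=$3-SAT} problem, constructing a MAPF-DL instance with deadline $\tcrit = 3$ that admits a zero-cost solution iff the formula is satisfiable; the gadget construction is adapted from the proof that the optimal MAPF makespan is NP-hard to approximate \cite{MaAAAI16}. You instead treat that makespan-hardness result as a black box and reduce from the decision problem ``does a MAPF solution of makespan $\le T$ exist?'', observing that this is exactly the question ``is $M_{\emph{unsucc}} = 0$?'' for the MAPF-DL instance with $\tcrit = T$. Your equivalence argument (padding a makespan-$\le T$ solution with waits in one direction, truncating and noting that goals must be pairwise distinct in the other) is sound, and you correctly dispose of the two points that could break the reduction: the modelling assumption that every agent can individually reach its goal within $\tcrit$ (instances violating it are recognizable ``no'' instances, or the offending agents can be deleted), and the need for the collision model (vertex plus edge-swap conflicts) to match the one in the cited hardness result --- it does for \cite{MaAAAI16}. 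What each approach buys: yours is shorter and avoids rebuilding gadgets, at the cost of depending on the cited result being stated for exactly the right decision version and conflict semantics; the paper's self-contained SAT reduction additionally pins down that hardness already holds for the fixed deadline $\tcrit = 3$, which your black-box reduction only inherits if you invoke the specific constant from \cite{MaAAAI16} rather than the generic NP-hardness of makespan minimization from \cite{YuLav13AAAI}.
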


The proof of the theorem reduces the \textbf{$\bm\le$3,$\bm=$3-SAT} problem
\cite{cat1984}, an NP-complete version of the Boolean satisfiability problem,
to MAPF-DL. The reduction is similar to the one used for proving
the NP-hardness of approximating the optimal makespan for MAPF
\cite{MaAAAI16}. It constructs a MAPF-DL instance with deadline $\tcrit = 3$
that has a zero-cost solution iff the given
\textbf{$\bm\le$3,$\bm=$3-SAT} instance is satisfiable. Also see our preliminary work \cite{MaAAMAS18}.

\subsubsection{ILP-Based MAPF-DL Algorithm}

Our ILP-based MAPF-DL algorithm first reduces MAPF-DL to the maximum (integer) multi-commodity flow problem, which is similar to the reductions of MAPF and a MAPF variant, TAPF, to multi-commodity flow problems
\cite{YuLav13ICRA,MaAAMAS16}. It then encodes the latter problem using a compact integer linear programming (ILP) formulation on a reduced abstracted multi-commodity flow network. See our preliminary work \cite{MaAAMAS18} for more details on this algorithm.

\section{Search-Based MAPF-DL Algorithms}


In this section, we present a spectrum of optimal combinatorial search algorithms for solving MAPF-DL: Conflict-Based Search with Deadlines (CBS-DL), an adapted version of Conflict-Based Search (CBS) \cite{DBLP:journals/ai/SharonSFS15}; Death-Based Search
(DBS), which reasons about sets of successful agents; and Meta-Agent DBS (MA-DBS), which incorporates the advantages of CBS-DL and DBS.

\subsection{CBS-DL}
\label{sec:CBS}

\begin{algorithm}[t]
\setlength\hsize{8.3cm}
\setstretch{0}
\KwIn{MAPF-DL instance}
    $\emph{Root}.\emph{constraints}\gets \emptyset$ \label{line:CBS:root_constraints}\\
    $\emph{Root}.\emph{plan}\gets$ path for each agent found by a low-level search\label{line:CBS:root_plan}\\
    $\emph{Root}.\emph{cost}\gets 0$ \label{line:CBS:root_key}\\
    $\OPEN \gets \{\emph{Root}\}$\\
    \While{\emph{true}\label{line:CBS:while_statement}}{
        $N \gets \arg\min_{N'\in\OPEN}N'.\emph{cost}$ \label{line:CBS:choose_node}\\
        $\OPEN \gets \OPEN\setminus\{N\}$\\
        Try to find a collision in $N.\emph{plan}$ \label{line:CBS:check_collisions}\\
        \If{$N.\emph{plan}$ has no collision}
        {
            \Return $N.\emph{plan}$ \label{line:CBS:return_solution}
        }
        C $\gets$ first vertex or edge collision $(\agent{i},\agent{j},\dots)$ in $N.\emph{plan}$  \label{line:CBS:new_collision}\\
        {\color{red}
        \emph{// begin: below for MA-DBS only} \label{line:CBS:begin_MADBS}\\
        \If{$\emph{shouldMerge}(\agent{i},\agent{j})$ \label{line:CBS:should_merge}}{
            $\agent{\{i,j\}} \gets \emph{merge}(\agent{i},\agent{j})$ \label{line:CBS:merge_agents}\\
            Update $N.\emph{constraints}$ (external constraints of $\agent{\{i,j\}}$) \label{line:CBS:meta_constraints}\\
            Update $N.\emph{plan}$ by invoking a low-level search (with DBS) for $\agent{\{i,j\}}$ \label{line:CBS:meta_path}\\
            $N.\emph{cost} \gets M_{\emph{unsucc}}$ in $N.\emph{plan}$ \label{line:CBS:meta_key}\\
            $\OPEN \gets \OPEN \cup \{N\}$ \label{line:CBS:insert_node_back}\\
            continue \label{line:CBS:continue_while_statement}// go to [Line \ref{line:CBS:while_statement}]\\
        }
        \emph{// end: above for MA-DBS only} \label{line:CBS:end_MADBS}\\
        }
        \ForEach{$\agent{i}$ involved in $C$ \label{line:CBS:two_child_nodes}}
        {
            $N' \gets$ new node \label{line:CBS:new_child}\\            $N'.\emph{plan} \gets N.\emph{plan}$ \label{line:CBS:inherit_plan}\\
            $N'.\emph{constraints} \gets N.\emph{constraints} \cup \{(\agent{i},\dots)\}$ \label{line:CBS:add_constraints}\\
            Update $N'.\emph{plan}$ by invoking a low-level search (with A* {\color{red}or DBS}) for $\agent{i}$ \label{line:CBS:call_low_level}\\
            $N'.\emph{cost} \gets M_{\emph{unsucc}}$ in $N'.\emph{plan}$\label{line:CBS:cost}\\
            $\OPEN \gets \OPEN \cup \{N'\}$ \label{line:CBS:insert_node}\\
        }
    }
\caption{High Level of CBS-DL ({\color{red}and MA-DBS})} \label{alg:CBS-high}
\end{algorithm}

(Standard) CBS is a two-level MAPF algorithm that minimizes the sum of the arrival times of all agents at their goal vertices. CBS-DL is an adaptation of CBS for MAPF-DL. Algorithm \ref{alg:CBS-high} shows its high-level search. Lines in red are used in MA-DBS (presented in Section \ref{sec:MA-DBS}) only. CBS-DL uses the same framework as CBS but uses $M_{\emph{unsucc}}$ as cost. On the high level, CBS-DL performs a best-first search to resolve collisions among the agents and thus builds a constraint tree (CT). Each CT node contains a set of constraints and a plan that obeys these constraints. CBS-DL always expands the CT node with the smallest cost $M_{\emph{unsucc}}$ of its plan. The root CT node has no constraints [Line \ref{line:CBS:root_constraints}]. CBS-DL performs a low-level search to find a path for each agent (without any constraints). The plan of the root CT node thus contains paths for all agents [Line \ref{line:CBS:root_plan}], and its cost is zero [Line \ref{line:CBS:root_key}]. When CBS-DL expands a CT node $N$, it checks whether the CT node contains a plan that has no collisions [Line \ref{line:CBS:check_collisions}]. If this is the case, $N$ is a goal node and CBS-DL terminates successfully [Line \ref{line:CBS:return_solution}]. Otherwise, CBS-DL chooses a collision to resolve [Line \ref{line:CBS:new_collision}] and generates two child nodes $N_1$ and $N_2$ that inherit all constraints and the plan from $N$ [Line \ref{line:CBS:two_child_nodes}-\ref{line:CBS:inherit_plan}]. If the collision to resolve is a vertex collision $( \agent{i}, \agent{j}, v, t)$, CBS-DL adds the vertex constraint $( \agent{i}, v, t)$ to $N_1$ to prohibit agent $\agent{i}$ from occupying $v$ at time step $t$ and similarly adds the vertex constraint $( \agent{j}, v, t)$ to $N_2$. If the collision to resolve is an edge collision $( \agent{i}, \agent{j}, u, v, t)$, CBS-DL adds the edge constraint $( \agent{i}, u, v, t)$ to $N_1$ to prohibit agent $\agent{i}$ from moving from $u$ to $v$ at time step $t$ and similarly adds the edge constraint $( \agent{j}, v, u, t)$ to $N_2$ [Line \ref{line:CBS:add_constraints}]. For each child CT node, say $N_1$, CBS-DL performs a low-level search for agent $\agent{i}$ to compute a new path from its start vertex to its goal vertex within deadline $\tcrit$ that obeys the constraints of $N_1$ relevant to agent $\agent{i}$ and replaces the old path of agent $\agent{i}$ in $N_1.\emph{plan}$ with the new path returned by the low-level search (it deletes the old path if no path is returned) [Line \ref{line:CBS:call_low_level}]. CBS-DL thus updates the cost of $N_1$ accordingly and inserts it into OPEN [Lines \ref{line:CBS:cost}-\ref{line:CBS:insert_node}].

On the low level, CBS-DL performs an A* search to find a path for the agent from its start vertex to its goal vertex by pruning all nodes with time step $>\tcrit$. If it finds a path from the start vertex to the goal vertex of length exactly $\tcrit$ time steps that obeys the constraints imposed by the high level, it returns the path for the agent and cost 0. Otherwise, it returns no path and cost 1.

\subsubsection{Theoretical Analysis}

We now prove that CBS-DL is complete and optimal.


\begin{lem} \label{lem:finite_nodes}
CBS-DL generates only finitely many CT nodes.
\end{lem}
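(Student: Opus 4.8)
The plan is to bound the total number of CT nodes by bounding both the branching factor and the depth of the constraint tree (CT). Since every vertex collision and every edge collision involves exactly two \emph{distinct} agents, each CT node that is expanded spawns at most two children (Line~\ref{line:CBS:two_child_nodes}), so the CT is a binary tree; it therefore suffices to show that every root-to-node path in the CT has bounded length, i.e.\ that the CT has bounded depth.

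The first step is to observe that there are only finitely many constraints that CBS-DL can ever add. A vertex constraint has the form $(\agent{i},v,t)$ with $\agent{i}$ one of the $M$ agents, $v \in V$, and $t \in \{0,\dots,\tcrit\}$, and an edge constraint has the form $(\agent{i},u,v,t)$ with $(u,v)\in E$ (in one of its two orientations) and $t \in \{1,\dots,\tcrit\}$ --- all time steps are at most $\tcrit$ because plans, and hence the collisions found in them, never extend past the deadline. Hence the set $\mathcal{C}$ of all syntactically possible constraints has size at most $M(\tcrit+1)|V| + 2M\tcrit|E|$, which is finite because $G$ is finite, $M$ is finite, and $\tcrit$ is fixed. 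Every CT node's constraint set is a subset of $\mathcal{C}$.

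The second and key step is to show that along any edge of the CT the child's constraint set is a strict superset of the parent's. This relies on the invariant, maintained from the root (Lines~\ref{line:CBS:root_constraints}--\ref{line:CBS:root_plan}) and through each expansion (Lines~\ref{line:CBS:add_constraints}--\ref{line:CBS:call_low_level}), that $N.\emph{plan}$ always obeys $N.\emph{constraints}$. When CBS-DL resolves a collision $C$ in $N.\emph{plan}$ involving $\agent{i}$ at vertex $v$ at time $t$ (or on edge $(u,v)$ at time $t$), the child $N'$ receives the added constraint $(\agent{i},v,t)$ (resp.\ $(\agent{i},u,v,t)$). Since $\agent{i}$'s path in $N.\emph{plan}$ witnesses $C$, this added constraint is violated by $N.\emph{plan}$; were it already in $N.\emph{constraints}$, the invariant for $N$ would fail. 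Hence the added constraint is new, so $|N'.\emph{constraints}| = |N.\emph{constraints}| + 1$. Therefore the constraint-set size strictly increases from parent to child, so no root-to-node path has length exceeding $|\mathcal{C}|$, i.e.\ the CT has depth at most $|\mathcal{C}|$. A binary tree of depth at most $|\mathcal{C}|$ has at most $2^{|\mathcal{C}|+1}-1$ nodes, which proves the claim.

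I expect the main obstacle to be the second step, and specifically the care needed to argue that the added constraint is genuinely new in the presence of the low-level search on Line~\ref{line:CBS:call_low_level} possibly failing and deleting an agent's path: one must check that an agent with no path in $N.\emph{plan}$ cannot be the agent for which a constraint is added at the current expansion (a constraint is only added for an agent that participates in a collision, which requires it to occupy a vertex at some time), and that the ``plan-obeys-constraints'' invariant is preserved whether or not the low-level search returns a path. The counting in the first step and the binary-tree bound in the third step are routine. (The same argument applies verbatim to MA-DBS, whose merge steps only ever remove constraints-carrying agents by combining them, never lengthen root-to-node paths.)
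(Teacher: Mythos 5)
Your proposal is correct and follows essentially the same route as the paper: the paper's proof likewise notes that the added constraint must be new because the parent's paths do not obey it, and that the binary CT has finite depth because paths are bounded by $\tcrit$ and only finitely many vertex and edge constraints exist. Your write-up simply makes the counting, the plan-obeys-constraints invariant, and the deleted-path edge case explicit.
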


\begin{proof}
The constraint added on Line \ref{line:CBS:add_constraints} to a child CT node is different from the constraints of its parent CT node since the paths of its parent CT node do not obey it. The depth of the (binary) CT is finite because all paths are not longer than $\tcrit$ and only finitely many different vertex and edge constraints exist.
\end{proof}

\begin{lem}\label{lem:terminate_finite}
Whenever CBS-DL chooses a CT node on Line \ref{line:CBS:choose_node} and the plan of the node has no collisions, then CBS-DL terminates with a solution with finite cost.
\end{lem}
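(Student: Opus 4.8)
The plan is to show two things: that the CT node $N$ chosen on Line~\ref{line:CBS:choose_node} is returned immediately, and that the returned object $N.\emph{plan}$ is a legitimate MAPF-DL solution whose cost is a finite number (in fact at most $M$). Termination is the easy half: if $N.\emph{plan}$ has no collision, the check on Line~\ref{line:CBS:check_collisions} detects none, the condition of the subsequent \textbf{if} holds, and CBS-DL returns $N.\emph{plan}$ on Line~\ref{line:CBS:return_solution} and halts. So it remains to argue that what is returned really is a solution with finite cost.

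To see that $N.\emph{plan}$ is a valid plan in the sense of the problem definition, I would use the invariant that the plan stored in any CT node consists, for each agent, either of no path or of a path that was at some point produced by the low-level search (the root plan on Line~\ref{line:CBS:root_plan}, an updated path on Line~\ref{line:CBS:call_low_level}, or a path inherited unchanged from an ancestor via Line~\ref{line:CBS:inherit_plan}). By the definition of the low-level search, any path it returns starts at the agent's start vertex, uses only moves to adjacent vertices or waits, and has length exactly $\tcrit$; hence it satisfies conditions~(1) and~(2) and ends at the agent's goal vertex, so the corresponding agent is successful. Agents for which no path is stored are exactly the unsuccessful agents removed at time step zero, which is permitted. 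Thus $N.\emph{plan}$ assigns a valid path to each agent it treats as successful and nothing to the others, i.e., it is a plan; the fact that some of these paths may violate constraints of $N$ is irrelevant, since constraints are internal bookkeeping for CBS-DL and do not enter the definition of a solution.

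Finally, since $N.\emph{plan}$ has no collision by hypothesis, it is a solution; its cost $M_{\emph{unsucc}}$ is the number of agents with no stored path, which lies between $0$ and $M$ and is therefore finite (equivalently, every MAPF-DL instance admits the trivial cost-$M$ solution, so every solution has cost at most $M$). I do not anticipate a genuine obstacle here; the only point requiring care is establishing the invariant on node plans used above, which is a routine induction on the order in which CT nodes are generated, relying on the stated behavior of the low-level search and on the inheritance of plans by child nodes.
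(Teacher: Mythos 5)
Your proposal is correct and takes essentially the same route as the paper, whose entire proof is the one-line observation that the node's cost is $M_{\emph{unsucc}}$ of its plan and hence bounded by $M$. The additional details you supply (the immediate return on Line~\ref{line:CBS:return_solution}, and the inductive check that every stored path comes from the low-level search and therefore satisfies the plan conditions) are sound elaborations of facts the paper treats as immediate from the algorithm's description.
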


\begin{proof}
The cost of the CT node is $M_{\emph{unsucc}}$ of its plan, which is bounded by $M$.
\end{proof}

\begin{lem} \label{lem:most_succ_agents}
The plan of a CT node has the largest possible number of paths (one for each successful agent) that obey its constraints.
\end{lem}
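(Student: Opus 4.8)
The plan is to exploit the observation that every constraint CBS-DL ever creates restricts exactly one agent: a vertex constraint $(\agent{i},v,t)$ and an edge constraint $(\agent{i},u,v,t)$ each mention a single agent $\agent{i}$. Given the constraint set of a CT node $N$, call a path $l_i$ \emph{$N$-feasible for $\agent{i}$} if it satisfies the path conditions (1) and (2), occupies $g_i$ at time step $\tcrit$, and obeys every constraint of $N$ of the form $(\agent{i},\dots)$. Because no constraint couples two agents, the existence of an $N$-feasible path for $\agent{i}$ depends only on $\agent{i}$ and on the constraints of $N$ that mention $\agent{i}$, and this is exactly what the low-level search looks for. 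Since the low-level search is an A* search on the finite time-expanded graph of depth $\tcrit$, it is complete: it returns a path for $\agent{i}$ iff an $N$-feasible path for $\agent{i}$ exists, and whenever it returns a path, that path is $N$-feasible.

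First I would establish, by induction on the depth of $N$ in the CT, the invariant: \emph{for every agent $\agent{i}$, $N.\emph{plan}$ contains a path for $\agent{i}$ iff an $N$-feasible path for $\agent{i}$ exists, and every path in $N.\emph{plan}$ is $N$-feasible for the agent it is assigned to.} For the root the low-level search is run for all agents with the empty constraint set [Line \ref{line:CBS:root_plan}], so the invariant follows from completeness of the low-level search. For the inductive step, a child $N'$ inherits the plan and constraints of its parent $N$ and adds a single constraint $(\agent{i},\dots)$ for one agent $\agent{i}$ [Line \ref{line:CBS:add_constraints}], then re-runs the low-level search for $\agent{i}$ alone [Line \ref{line:CBS:call_low_level}]. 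The invariant holds for $\agent{i}$ in $N'$ by completeness of that re-run search. For every other agent $\agent{j}$, the constraints of $N'$ mentioning $\agent{j}$ are identical to those of $N$ and $\agent{j}$'s path is inherited unchanged from $N$; hence, by the induction hypothesis, $N'.\emph{plan}$ contains a path for $\agent{j}$ iff an $N$-feasible — equivalently $N'$-feasible — path for $\agent{j}$ exists, and any such inherited path is $N$-feasible and therefore $N'$-feasible for $\agent{j}$. This proves the invariant for $N'$.

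Finally I would derive the lemma from the invariant. Let $P$ be any plan obeying the constraints of $N$, and let $\agent{i}$ be a successful agent of $P$. Its path in $P$ satisfies path conditions (1) and (2), occupies $g_i$ at time step $\tcrit$ (because $\agent{i}$ is successful), and obeys all constraints of $N$ that mention $\agent{i}$ (because $P$ obeys $N$'s constraints), so it is an $N$-feasible path for $\agent{i}$. By the invariant, $N.\emph{plan}$ therefore also contains a path for $\agent{i}$. Thus the set of successful agents of $P$ is contained in the set of successful agents of $N.\emph{plan}$, so $N.\emph{plan}$ contains at least as many paths as $P$; since $P$ was arbitrary, $N.\emph{plan}$ has the largest possible number of paths that obey the constraints of $N$. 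I expect the only subtle point to be the bookkeeping in the inductive step: verifying that an inherited path remains both a valid path and constraint-obeying precisely because the newly added constraint mentions a different agent, together with the routine but essential appeal to completeness of the low-level A* search for paths of length exactly $\tcrit$.
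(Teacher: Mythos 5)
Your proof is correct and follows essentially the same route as the paper: an induction on CT depth using the facts that each constraint mentions only one agent, that the low-level A* search is complete for that agent's individually constrained path, and that all other agents' paths are inherited unchanged. You merely make explicit two steps the paper leaves implicit — the per-agent feasibility invariant and the final observation that, since plans need not be collision-free, maximizing the number of paths decomposes agent by agent — which is a welcome sharpening rather than a different argument.
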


\begin{proof}[Proof (by induction)]
The statement holds for the root CT node because its plan contains one path for each agent (since each agent can reach its goal vertex in $\tcrit$ time steps in the absence of other agents). Assume that the statement holds for the parent CT node $N$ of any child CT node $N'$. When CBS-DL updates the plan of $N'$ on Line \ref{line:CBS:call_low_level}, it changes the path for one agent only, say agent $\agent{i}$, by performing a low-level search with the constraints of $N'$ (including the newly added constraint $\langle\agent{i},\dots \rangle$). Therefore, CBS-DL correctly updates the path for agent $\agent{i}$, and the statement holds also for $N'$ due to the induction assumption and the fact that $N'.\emph{plan}$ inherits the paths of all agents different from agent $\agent{i}$ from $N.\emph{plan}$ on Line \ref{line:CBS:inherit_plan}.
\end{proof}

\begin{lem}\label{lem:nondecreasing}
CBS-DL chooses CT nodes on Line \ref{line:CBS:choose_node} in non-decreasing order of their costs.
\end{lem}

\begin{proof}
CBS-DL performs a best-first search and the cost of a parent CT node $N$ is at most the cost of any of its child CT nodes $N'$ since $N'.\emph{plan}$ contains at most as many paths as $N.\emph{plan}$ contains because (1) the plan of a CT node contains the largest possible number of paths (one for each successful agent) that obey its constraints according to Lemma \ref{lem:most_succ_agents}, and thus (2) the set of all plans that obey $N'.\emph{constraints}$ is a subset of the set of all plans that obey $N.\emph{constraints}$ (since $N.\emph{constraints} \subset N'.\emph{constraints}$ due to Line \ref{line:CBS:add_constraints}).
\end{proof}

\begin{lem}\label{lem:key_upper_bound}
The cost of a CT node is at most the cost of any solution that obeys its constraints.
\end{lem}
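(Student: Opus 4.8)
The plan is to obtain this as an almost immediate consequence of Lemma~\ref{lem:most_succ_agents}. Fix a CT node $N$ and let $\Pi$ be an arbitrary solution that obeys $N.\emph{constraints}$; write $M_{\emph{succ}}(\Pi)$ and $M_{\emph{unsucc}}(\Pi)$ for its numbers of successful and unsuccessful agents, and similarly $M_{\emph{succ}}(N.\emph{plan})$ and $M_{\emph{unsucc}}(N.\emph{plan})$ for the plan of $N$. By definition, $\Pi$ consists of a collision-free collection of paths, one per successful agent, each obeying the constraints of $N$ relevant to that agent; its cost is $M_{\emph{unsucc}}(\Pi)$, and $N.\emph{cost} = M_{\emph{unsucc}}(N.\emph{plan})$ by Lines~\ref{line:CBS:root_key}, \ref{line:CBS:meta_key}, and \ref{line:CBS:cost}. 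The goal is thus $M_{\emph{unsucc}}(N.\emph{plan}) \le M_{\emph{unsucc}}(\Pi)$.

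First I would observe that, disregarding its collision-freeness, $\Pi$ is in particular \emph{some} set of paths each of which obeys $N.\emph{constraints}$ --- exactly the kind of object over which Lemma~\ref{lem:most_succ_agents} asserts that $N.\emph{plan}$ attains the maximum number of paths. Hence $M_{\emph{succ}}(\Pi) \le M_{\emph{succ}}(N.\emph{plan})$. Since both $\Pi$ and $N.\emph{plan}$ involve the same fixed set of $M$ agents, subtracting from $M$ reverses the inequality: $M_{\emph{unsucc}}(\Pi) = M - M_{\emph{succ}}(\Pi) \ge M - M_{\emph{succ}}(N.\emph{plan}) = M_{\emph{unsucc}}(N.\emph{plan}) = N.\emph{cost}$. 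That is precisely the claimed bound.

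The main --- and essentially the only --- point that needs care is the scoping step: one must argue that passing from an arbitrary constraint-obeying collection of paths to a collision-free one only \emph{shrinks} the feasible set, so the maximum computed implicitly in $N.\emph{plan}$ is a valid upper bound on the number of paths in every collision-free solution obeying $N.\emph{constraints}$, not merely on the number of paths in non-collision-free plans. Once this is stated, the rest is just the arithmetic relating $M_{\emph{succ}}$ and $M_{\emph{unsucc}}$; no induction, case analysis on collision types, or appeal to the other lemmas is needed.
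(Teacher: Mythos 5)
Your proof is correct and follows essentially the same route as the paper's: both reduce the claim to Lemma~\ref{lem:most_succ_agents} plus the observation that every solution obeying the constraints is in particular a (not necessarily collision-free) plan obeying them, so the optimum over plans bounds the optimum over solutions. The only cosmetic difference is that you phrase the comparison via maximizing $M_{\emph{succ}}$ and subtracting from $M$, whereas the paper phrases it directly as minimizing $M_{\emph{unsucc}}$ over plans.
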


\begin{proof}
The cost of the CT node is the cost $M_{unsucc}$ of its plan, which in turn is the minimum among the costs of all plans that obey its constraints according to Lemma \ref{lem:most_succ_agents}, which in turn is at most the cost of any solution that obeys its constraints since every solution that obeys its constraints is also a plan that obeys its constraints.
\end{proof}

\begin{thm}\label{thm:CBS-DL}
CBS-DL is complete and optimal.
\end{thm}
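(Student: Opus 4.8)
The plan is to combine the five lemmas with one additional structural invariant about the constraint tree (CT). The invariant to prove is: at every point during the execution of CBS-DL, for every solution $P$ of the given MAPF-DL instance, either CBS-DL has already terminated or $\OPEN$ contains a CT node whose constraints are all obeyed by $P$ --- with the convention that an agent that is unsuccessful in $P$, having no path, vacuously obeys every constraint imposed on it. I would establish this by induction on the number of high-level node expansions. It holds initially since $\OPEN=\{\emph{Root}\}$ and $\emph{Root}$ has no constraints. For the inductive step, suppose CBS-DL expands a node $N$ whose constraints $P$ obeys, on the chosen collision $C=(\agent{i},\agent{j},\dots)$. Because $P$ is a solution it contains this collision in neither agent's path, so if $C$ is a vertex collision $(\agent{i},\agent{j},v,t)$ then at least one of $\agent{i},\agent{j}$ does not occupy $v$ at time $t$ in $P$ (and analogously, with an edge argument, if $C$ is an edge collision); hence $P$ obeys at least one of the two complementary constraints added to the children on Line~\ref{line:CBS:add_constraints}, and that child is inserted into $\OPEN$. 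Nodes that $P$ did not obey are irrelevant to $P$, so the invariant is preserved.

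Given the invariant, completeness follows quickly. The trivial all-unsuccessful plan is a solution and obeys every constraint set, so $\OPEN$ is nonempty as long as CBS-DL has not terminated; by Lemma~\ref{lem:finite_nodes} only finitely many CT nodes are ever generated, and each is added to $\OPEN$ once and removed at most once, so the high-level loop performs finitely many iterations and must therefore exit through Line~\ref{line:CBS:return_solution}, returning a collision-free plan, i.e., a solution (whose cost is finite by Lemma~\ref{lem:terminate_finite}).

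For optimality, let $N^\ast$ be the CT node whose plan is returned, so $N^\ast.\emph{plan}$ is a solution of cost $N^\ast.\emph{cost}$, and let $P$ be an arbitrary solution. At the iteration in which $N^\ast$ is chosen on Line~\ref{line:CBS:choose_node}, CBS-DL has not yet terminated, so by the invariant $\OPEN$ contains a node $N_P$ whose constraints $P$ obeys. Since $N^\ast=\arg\min_{N'\in\OPEN}N'.\emph{cost}$ and $N_P\in\OPEN$ at that moment, $N^\ast.\emph{cost}\le N_P.\emph{cost}$; and by Lemma~\ref{lem:key_upper_bound} (itself resting on Lemma~\ref{lem:most_succ_agents}) $N_P.\emph{cost}\le M_{\emph{unsucc}}$ of $P$. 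Chaining these, $N^\ast.\emph{cost}$ is no larger than the cost of any solution, so the returned solution minimizes $M_{\emph{unsucc}}$, equivalently maximizes $M_{\emph{succ}}$. Lemma~\ref{lem:nondecreasing} offers a complementary reading (chosen nodes have non-decreasing cost, so the first collision-free node chosen already has minimum cost), but is not strictly required for the argument above.

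The main obstacle is stating and verifying the invariant cleanly: one must be careful with agents that are unsuccessful in $P$ (they contribute no path, hence cause no collision and violate no constraint) and must confirm that the pair of constraints added on Line~\ref{line:CBS:add_constraints} is genuinely exhaustive for resolving the chosen collision, both in the vertex and the edge case. Once the invariant is in place, the remainder is a short assembly of Lemmas~\ref{lem:finite_nodes}--\ref{lem:key_upper_bound}.
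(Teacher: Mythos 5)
Your proof is correct, and it reorganizes the argument in a genuinely different way from the paper's, even though both rest on the same two pillars: the exhaustiveness of the pair of constraints added on Line~\ref{line:CBS:add_constraints} with respect to any fixed solution, and Lemma~\ref{lem:key_upper_bound} as the lower-bound property. The paper argues by contradiction: assuming the optimal cost $x$ is never returned, it follows an infinite descending chain of CT nodes whose constraints the optimal solution obeys, each with cost $\le x$, and contradicts the finiteness of the CT (Lemma~\ref{lem:finite_nodes}); this route leans on Lemma~\ref{lem:nondecreasing} and Lemma~\ref{lem:terminate_finite} to rule out early termination at cost $x$. You instead make the covering property an explicit loop invariant (``for every solution $P$, \OPEN{} contains a node whose constraints $P$ obeys'') and derive everything directly: nonemptiness of \OPEN{} plus Lemma~\ref{lem:finite_nodes} gives termination, and comparing the returned node against the witness $N_P$ via the $\arg\min$ selection on Line~\ref{line:CBS:choose_node} and Lemma~\ref{lem:key_upper_bound} gives optimality without invoking Lemma~\ref{lem:nondecreasing} at all. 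Your version buys a constructive, contradiction-free argument and makes the role of unsuccessful agents (vacuous constraint satisfaction) explicit, which the paper leaves implicit; the paper's version buys brevity by reusing its already-stated lemmas wholesale. Both are sound; yours is arguably the cleaner template, since the same invariant also transfers to MA-DBS with minimal change.
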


\begin{proof}
A solution always exists, for example, where all agents are unsuccessful. Now assume that the cost of an optimal solution is $x$ and, for a proof by contradiction, that CBS-DL does not terminate with a solution of cost $x$. Therefore, whenever CBS-DL chooses a CT node with cost $x$ on Line \ref{line:CBS:choose_node}, its plan has collisions (because otherwise CBS-DL would correctly terminate with a solution of cost $x$ according to Lemma \ref{lem:terminate_finite} since it chooses CT nodes on Line \ref{line:CBS:choose_node} in non-decreasing order of their costs according to Lemma \ref{lem:nondecreasing}). Pick an arbitrary optimal solution. A CT node whose constraints the optimal solution obeys has cost $\le x$ according to Lemma \ref{lem:key_upper_bound}. The root CT node is such a node since the optimal solution trivially obeys its (empty) constraints. Whenever CBS-DL chooses such a CT node on Line \ref{line:CBS:choose_node}, its plan has collisions (as shown directly above since its cost is $\le x$). CBS-DL thus generates the child CT nodes of this parent CT node, the constraints of at least one of which the optimal solution obeys and which CBS-DL thus inserts into $\OPEN$ with cost $\le x$. Since CBS-DL chooses CT nodes on Line \ref{line:CBS:choose_node} in non-decreasing order of their costs according to Lemma \ref{lem:nondecreasing}, it chooses infinitely many CT nodes on Line \ref{line:CBS:choose_node} with costs $\le x$, which contradicts Lemma \ref{lem:finite_nodes}.
\end{proof}

\subsection{Death-Based Search}

Death-Based Search (DBS) is also a two-level algorithm. Conceptually, instead of imposing vertex or edge constraints on agents, DBS marks individual agents as unsuccessful and then searches for the minimal set of unsuccessful agents necessary to produce a solution.

We define a group $\group$ of agents to be \emph{consistent} iff all agents in it can simultaneously be successful, that is, the sub-MAPF-DL instance with the agents in $\group$ has a zero-cost solution (an empty group is consistent).  This condition is verified by a special call to CBS-DL with deadline $\tcrit$, which reports that the condition holds if all agents in $\group$ are successful or reports that the condition does not hold once CBS-DL expands a CT node with non-zero cost (that is, at least one agent in $\group$ is not successful).

On the high level, DBS performs a best-first search on the \emph{death tree} (DT).  Each DT node $\dtnode{}$ contains a set \nodelive{} of disjoint groups of live agents (agents that have not been declared unsuccessful) and a cost \nodecost{} equal to the number of agents that have been declared unsuccessful. Algorithm \ref{alg:DBS-high} shows the high-level search of DBS. The root DT node contains a set of $M$ groups of live agents, each group containing a single unique agent [Lines \ref{line:DBS:initiate_set}] and its cost is zero [Line \ref{line:DBS:initiate_key}]. DBS chooses the DT node \dtnode{} with the smallest cost \nodecost{} and checks if all groups in its set \nodelive{} are consistent [Line \ref{line:DBS:choose_node}-\ref{line:DBS:check_consistent}]. If \nodelive{} contains a single consistent group $\group$, the DT node \dtnode{} is a goal node, and DBS returns the zero-cost solution for $\group$ [Line \ref{line:DBS:goal_node}]. If all (more than one) groups in \nodelive{} are consistent, DBS merges the two smallest groups $\group_1$ and $\group_2$ in \nodelive{} to form a new group $\group$ and adds a child DT node whose set contains all the groups in \nodelive{} but replaces $\group_1$ and $\group_2$ with $\group$ [Lines \ref{line:DBS:new_only_child}-\ref{line:DBS:insert_node_back}]. Otherwise, there is an inconsistent group $\group$ in \nodelive{} [Line \ref{line:DBS:inconsistent_group}]. We know that at least
one agent in $\group$ must be declared unsuccessful, forcing a split. In this case, DBS adds $|\group|$ child nodes, one for each agent $\agent{i} \in \group$, to DT, where each of these nodes declares its own unique agent $\agent{i} \in \group$ unsuccessful, and its cost is thus one larger than that of its parent [Lines \ref{line:DBS:new_child}-\ref{line:DBS:insert_child}].

\begin{algorithm}[t]
\setlength\hsize{8.3cm}
\setstretch{0}
\KwIn{MAPF-DL instance}
    $\emph{Root}.\emph{live}\gets \{\{\agent{i}\}|i = 1\ldots M\}$ \label{line:DBS:initiate_set}\\
    $\emph{Root}.\emph{cost}\gets 0$ \label{line:DBS:initiate_key}\\
    $\OPEN \gets \{\emph{Root}\}$ \label{line:DBS:insert_root}\\
    \While{\emph{true}}{
        $N \gets \arg\min_{N'\in\OPEN}N'.\emph{cost}$ \label{line:DBS:choose_node}\\
        $\OPEN \gets \OPEN \setminus \{N\}$ \label{line:DBS:remove_node}\\
         Check whether all groups in $N.\emph{live}$ are consistent by calling CBS-DL \label{line:DBS:check_consistent}\\
        \If{all groups in $N.\emph{live}$ are consistent}
        {
            \If{$|N.\emph{live}| = 1$}
            {
                \Return the zero-cost solution for the single group $\gamma$ in $N.\emph{live}$ \label{line:DBS:goal_node}\\
            }
            \Else
            {
                $N' \gets$ new node \label{line:DBS:new_only_child}\\
                $\gamma \gets \gamma_1 \cup \gamma_2$ ($\gamma_1$ and $\gamma_2$ are the smallest groups in $N.\emph{live}$) \label{line:DBS:two_smallest_groups}\\
                $N'.\emph{live} \gets \left(N.\emph{live} \setminus \{\gamma_1, \gamma_2\} \right) \cup \{\gamma\} $ \label{line:DBS:merge_two_groups}\\
                $N'.\emph{cost}\gets \nodecost{}$ \label{line:DBS:same_key}\\
                $\OPEN \gets \OPEN \cup \{N'\}$ \label{line:DBS:insert_node_back}\\
            }
        }
        \Else
        {
            $\gamma \gets$ first group in $N.\emph{live}$ that does not have a zero-cost solution \label{line:DBS:inconsistent_group}\\
            \ForEach{$\agent{i} \in \gamma$ \label{line:DBS:branch}}
            {
                $N' \gets$ new node \label{line:DBS:new_child}\\
                $N'.\emph{live} \gets \left( \nodelive{}  \setminus \{\group\} \right) \cup \{\group \setminus \{\agent{i}\}\}$ \label{line:DBS:kill_agent}\\
                $N'.\emph{cost} \gets \nodecost{} + 1$ \label{line:DBS:increase key}\\
                $\OPEN \gets \OPEN \cup \{N'\}$ \label{line:DBS:insert_child}\\
            }
        }
    }
\caption{High Level of DBS} \label{alg:DBS-high}
\end{algorithm}


\subsubsection{Other Versions of DBS}

DBS could have started with a root DT node [Line \ref{line:DBS:initiate_set}] whose set contains only a single group of all $M$ agents, which does not require merging groups of live agents but results in a larger branching factor for the root DT node. DBS could have chosen different groups to merge [Line \ref{line:DBS:merge_two_groups}], which might result in an inconsistent group of larger size. Whenever DBS splits a parent DT node [Lines \ref{line:DBS:new_child}-\ref{line:DBS:insert_child}], it could have generated child DT nodes whose sets contain only consistent additional groups (and thus possibly declare more than one additional agent unsuccessful for the child DT nodes), which requires a procedure that can determine all consistent subgroups of the (inconsistent) group $\group$ of agents efficiently and might result in a larger branching factor. In this paper, we chose to present the version that is the easiest to understand and analyze.

\subsubsection{Theoretical Analysis}

We now prove that DBS is complete and optimal.

\begin{lem} \label{lem:DBS:finite_nodes}
DBS generates only finitely many DT nodes.
\end{lem}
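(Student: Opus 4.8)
The plan is to show that the death tree (DT) has bounded depth and bounded branching factor, hence is a finite tree, and therefore DBS can generate only finitely many DT nodes. I would start by classifying the three kinds of expansions in Algorithm~\ref{alg:DBS-high} that a popped node $N$ can undergo. A \emph{goal} expansion (a single consistent group in $\nodelive{}$) creates no children. A \emph{merge} expansion (Lines~\ref{line:DBS:new_only_child}--\ref{line:DBS:insert_node_back}) creates exactly one child, leaves $\nodecost{}$ unchanged, and decreases the number $g$ of groups in $\nodelive{}$ by exactly one, since two groups are replaced by their union. A \emph{split} expansion (Lines~\ref{line:DBS:new_child}--\ref{line:DBS:insert_child}) creates $|\group|$ children, increases $\nodecost{}$ by exactly one, and leaves $g$ unchanged, since one group $\group$ is replaced by $\group\setminus\{\agent{i}\}$. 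In particular every DT node has at most $M$ children (a split group has size at most $M$, a merge node has one child, a goal node has none).

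Next I would extract two monotone quantities along any root-to-node path. The root has $\nodecost{}=0$ and $g=M$. By the classification above, $\nodecost{}$ is non-decreasing along the path and equals the number of split expansions performed, and $g$ is non-increasing and equals $M$ minus the number of merge expansions performed (and $g\ge 1$ throughout, since a merge is performed only when $g>1$, producing $g-1\ge1$). To bound $\nodecost{}$, I would use the invariant $\sum_{\group\in\nodelive{}}|\group| = M - \nodecost{}$, which holds at the root and is preserved by merges (disjoint union) and splits; since the left-hand side is nonnegative, $\nodecost{}\le M$, so the path contains at most $M$ splits, and since $g\ge1$ it contains at most $M-1$ merges. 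Hence the depth of any DT node is at most $(M-1)+M = 2M-1$.

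Finally, combining the depth bound $2M-1$ with the branching-factor bound $M$, the DT contains at most $\sum_{d=0}^{2M-1} M^{d}$ nodes, which is finite, and since every node DBS generates is a node of this tree, DBS generates only finitely many DT nodes. I expect the only mildly delicate point to be justifying that $\nodecost{}$ is bounded by $M$, i.e.\ that a split never occurs once every agent has been declared unsuccessful; this is exactly what the size invariant $\sum_{\group\in\nodelive{}}|\group| = M - \nodecost{}$ delivers, because an inconsistent group forcing a split must be non-empty. Everything else is bookkeeping on the three expansion types.
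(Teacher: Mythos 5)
Your proof is correct, and it follows the same overall strategy as the paper's: bound the branching factor of the DT by $M$ and then bound its depth. The difference is in how the depth bound is obtained. The paper observes that, by Lines~\ref{line:DBS:merge_two_groups} and~\ref{line:DBS:kill_agent}, each DT node's set has either one fewer group or one fewer agent than its parent's, so the sets along any branch are pairwise distinct, and finiteness of the depth follows from there being only finitely many sets of disjoint groups of the $M$ agents --- a qualitative argument with a potentially large implicit bound. You instead run a potential-function argument: the invariant $\sum_{\group\in\nodelive{}}|\group| = M - \nodecost{}$ gives at most $M$ splits, and $g\ge 1$ gives at most $M-1$ merges, yielding the explicit depth bound $2M-1$ and hence an explicit (if exponential) node count. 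Your version is quantitatively sharper and also delivers $\nodecost{}\le M$ as a by-product (a fact the paper needs separately in Lemma~\ref{lem:DBS:terminate_finite}); the paper's version is shorter and avoids the one corner case your bookkeeping brushes past, namely that splitting a singleton group can leave an empty group in $\nodelive{}$ (so $g$ may occasionally drop at a split too --- harmless, since you only need $g$ non-increasing and merges bounded by the total decrease of $g$). Either argument establishes the lemma.
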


\begin{proof}
The branching factor of a DT node is bounded by $M$ due to Line \ref{line:DBS:branch}. Due to Lines \ref{line:DBS:merge_two_groups} and \ref{line:DBS:kill_agent}, when we consider each DT node in a downward traversal of any branch of DT from the root DT node, its set contains either one less group (when merging two groups) or one less agent (when declaring an unsuccessful agent) than that of its parent CT node. Its set is thus different from the sets of all its ancestor DT nodes. Therefore, the depth of DT is also finite since there are finitely many possible sets of disjoints groups of the $M$ agents.
\end{proof}

\begin{lem}\label{lem:DBS:terminate_finite}
Whenever DBS chooses a DT node on Line \ref{line:DBS:choose_node} whose set contains one single consistent group of live agents, then DBS correctly terminates with a solution of finite cost.
\end{lem}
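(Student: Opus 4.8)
The plan is to mirror the argument for Lemma~\ref{lem:terminate_finite}, replacing ``plan without collisions'' by ``single consistent group''. First I would unpack the hypothesis: DBS has just selected on Line~\ref{line:DBS:choose_node} a DT node $N$ whose set $N.\emph{live}$ is a single group $\group$, and the consistency check on Line~\ref{line:DBS:check_consistent} has certified $\group$ to be consistent. By the definition of consistency, the sub-MAPF-DL instance containing exactly the agents of $\group$ has a zero-cost solution, i.e., a set of collision-free paths, one per agent of $\group$, each running from that agent's start vertex to its goal vertex within the deadline $\tcrit$. Soundness of this certification rests on Theorem~\ref{thm:CBS-DL}: the special call to CBS-DL either reports that all agents of $\group$ are successful, or it expands a CT node of non-zero cost, which by the optimality of CBS-DL means no zero-cost solution for $\group$ exists; hence a ``consistent'' verdict is never spurious.

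Next I would verify that the object returned on Line~\ref{line:DBS:goal_node} is a genuine MAPF-DL solution for the \emph{full} instance. Every agent not in $\group$ was declared unsuccessful at some ancestor of $N$ along the branch from the root, each such declaration occurring on Line~\ref{line:DBS:kill_agent}; by the problem definition an unsuccessful agent is removed at time step zero and carries no path, so it participates in no vertex or edge collision. Therefore the collision-free paths for the agents of $\group$, together with the empty paths of the removed agents, form a plan without collisions for the full instance, i.e., a valid solution, so DBS indeed terminates correctly.

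For the cost bound I would track the cost along the branch from the root to $N$: the root has cost $0$, merges (Line~\ref{line:DBS:same_key}) leave the cost unchanged, and each declaration of an unsuccessful agent (Line~\ref{line:DBS:increase key}) raises it by one, so $N.\emph{cost}$ equals the number of removed agents, namely $M - |\group| \le M$; this is exactly $M_{\emph{unsucc}}$ of the returned solution, which is thus finite. The whole proof is essentially bookkeeping, and I expect no real obstacle; the only step needing a little care is invoking Theorem~\ref{thm:CBS-DL} to guarantee that the consistency oracle is correct and noting that padding the zero-cost solution of $\group$ with the removed agents preserves both feasibility and the claimed cost. The harder work---an analogue of the non-decreasing-cost property of Lemma~\ref{lem:nondecreasing} together with a contradiction argument in the style of Theorem~\ref{thm:CBS-DL}---will be needed for the subsequent completeness and optimality theorem for DBS, not for this lemma.
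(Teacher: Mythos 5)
Your proposal is correct and its essential content matches the paper's proof, which simply observes that the node's cost is the number of agents declared unsuccessful and hence bounded by $M$. The additional material you supply (soundness of the CBS-DL consistency check and validity of the assembled solution for the full instance) is accurate but is left implicit in the paper's one-line argument.
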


\begin{proof}
Its cost is the number of agents that have been declared unsuccessful, which is bounded by $M$.
\end{proof}

\begin{lem}\label{lem:DBS:nondecreasing}
DBS chooses DT nodes on Line \ref{line:DBS:choose_node} in non-decreasing order of their costs.
\end{lem}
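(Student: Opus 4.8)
The plan is to mirror the proof of Lemma~\ref{lem:nondecreasing} for CBS-DL: since DBS performs a best-first search on the DT (it always removes the node of smallest cost on Line~\ref{line:DBS:choose_node}), it suffices to show that the cost of any child DT node is at least the cost of its parent. I would therefore examine the two ways a child node is created and check that \nodecost{} never decreases along a parent-child edge.

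First I would handle the splitting case (Lines~\ref{line:DBS:new_child}--\ref{line:DBS:insert_child}): here a child node is generated only when the parent contains an inconsistent group, and Line~\ref{line:DBS:increase key} sets the child's cost to $\nodecost{}+1$, which is strictly larger than the parent's cost. Second I would handle the merging case (Lines~\ref{line:DBS:new_only_child}--\ref{line:DBS:insert_node_back}): Line~\ref{line:DBS:same_key} sets $N'.\emph{cost} \gets \nodecost{}$, so the child has exactly the same cost as its parent. In both cases the child's cost is $\ge$ the parent's cost, so expanding DT nodes in best-first order on \nodecost{} visits them in non-decreasing order of cost; a short appeal to the standard best-first-search argument (a node can only be chosen after all nodes of strictly smaller cost currently in $\OPEN$, and costs never decrease when new nodes are inserted) completes the argument.

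The main obstacle is slightly more delicate than for CBS-DL: in the merging case the child's cost equals the parent's, so cost alone does not force DT to make progress down any particular branch the way a strictly increasing measure would. However, for this particular lemma that is harmless — non-decreasing (rather than strictly increasing) is exactly what is claimed — and the key point I must not gloss over is simply that Line~\ref{line:DBS:same_key} copies the cost unchanged rather than, say, recomputing it in a way that could lower it. I would also note in passing that a merge does not by itself change which agents have been declared unsuccessful, which is why the cost is preserved; the actual reasoning about progress and termination (that DBS cannot loop forever choosing equal-cost nodes) is deferred to the completeness/optimality proof and relies on Lemma~\ref{lem:DBS:finite_nodes}, so it need not be invoked here.
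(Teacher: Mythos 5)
Your proposal is correct and takes essentially the same route as the paper: the paper's proof likewise observes that best-first search plus the fact that Lines \ref{line:DBS:same_key} and \ref{line:DBS:increase key} make each child's cost at least its parent's cost yields the non-decreasing order. Your additional remarks about the merge case preserving cost and deferring the progress argument to Lemma \ref{lem:DBS:finite_nodes} are accurate but not needed beyond what the paper states.
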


\begin{proof}
DBS performs a best-first search, and the cost of a parent DT node is at most the cost of any of its child DT nodes due to Lines \ref{line:DBS:same_key} and \ref{line:DBS:increase key}.
\end{proof}

%

\begin{thm}
DBS is complete and optimal.
\end{thm}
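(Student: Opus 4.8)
The plan is to follow the structure of the proof of Theorem~\ref{thm:CBS-DL}, replacing the notion of a plan that obeys a CT node's constraints by the notion of a solution that is \emph{compatible} with a DT node. Call a solution $P$, whose set of successful agents is $S$, compatible with a DT node $N$ iff every agent of $S$ is still live in $N$, that is, $S \subseteq \bigcup_{\gamma \in \nodelive{}} \gamma$. Two easy facts will be used repeatedly. (i) Since $\nodecost{}$ counts the agents declared unsuccessful and the live groups are pairwise disjoint, $\nodecost{} = M - \sum_{\gamma \in \nodelive{}} |\gamma|$; hence a DT node compatible with $P$ has cost at most $M - |S|$, which equals the cost of $P$. (ii) Every subset of $S$ is consistent, because deleting agents from a collision-free plan yields a collision-free plan in which the remaining agents still occupy their goal vertices at $\tcrit$; together with the correctness of the consistency test (which follows from Theorem~\ref{thm:CBS-DL}), this means DBS correctly classifies groups on Line~\ref{line:DBS:check_consistent}.

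The core of the argument is the following inductive step: whenever DBS chooses on Line~\ref{line:DBS:choose_node} a DT node $N$ compatible with a fixed solution $P$ of cost $x = M - |S|$, then either DBS terminates with a solution of cost $\le x$, or DBS generates a child DT node that is again compatible with $P$ and has cost $\le x$. I would prove this by cases on the branch taken in Algorithm~\ref{alg:DBS-high}. If $\nodelive{}$ is a single consistent group $\gamma \supseteq S$, DBS returns the zero-cost solution for $\gamma$ (Lemma~\ref{lem:DBS:terminate_finite}), whose cost is $M - |\gamma| \le M - |S| = x$. If $\nodelive{}$ consists of several consistent groups, the unique child merges two of them; this leaves the union of live agents and the cost unchanged, so the child is compatible and has cost $\le x$. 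If $\nodelive{}$ contains an inconsistent group $\gamma$, then $S \cap \gamma$ is consistent by fact~(ii), so $S \cap \gamma \subsetneq \gamma$ and there is an agent $\agent{i} \in \gamma \setminus S$; the child that declares $\agent{i}$ unsuccessful still has $S$ among its live agents and is therefore compatible. For the cost bound in this last case, observe that an inconsistent group cannot be a subset of $S$, so $\sum_{\gamma' \in \nodelive{}} |\gamma'| \ge |S| + 1$, whence $\nodecost{} \le x - 1$ by fact~(i) and the child has cost $\nodecost{} + 1 \le x$.

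Given the inductive step, I would conclude exactly as in Theorem~\ref{thm:CBS-DL}. A solution always exists (all agents unsuccessful). Let $x$ be the cost of an optimal solution and assume, for a contradiction, that DBS does not terminate with a solution of cost $x$; since DBS only ever returns a valid solution (on Line~\ref{line:DBS:goal_node}) and $x$ is optimal, DBS then never terminates with a solution of cost $\le x$. The root DT node is compatible with any optimal solution $P$ (all agents are live there) and has cost $0 \le x$. By induction on the iteration count and the inductive step above, $\OPEN$ always contains a DT node compatible with $P$ of cost $\le x$: if the chosen node is not this one it remains in $\OPEN$, and if it is this one the inductive step either terminates with cost $\le x$ (impossible by assumption) or adds a compatible child of cost $\le x$ to $\OPEN$. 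In particular DBS never terminates. Because DBS chooses DT nodes in non-decreasing order of cost (Lemma~\ref{lem:DBS:nondecreasing}) and $\OPEN$ always holds a node of cost $\le x$, every DT node it chooses has cost $\le x$; running forever, DBS therefore chooses infinitely many distinct DT nodes of cost $\le x$, contradicting Lemma~\ref{lem:DBS:finite_nodes}. Thus DBS terminates with a solution of cost $x$, proving both completeness and optimality.

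The main obstacle I anticipate is getting the compatibility invariant and the inconsistent-group case exactly right: one has to notice that the presence of an inconsistent live group already forces the parent's cost to be strictly below $x$, so that incrementing the cost by one when an agent is declared unsuccessful still keeps the child at cost $\le x$, and one has to make sure the agent declared unsuccessful can always be chosen outside $S$. The remaining steps are the same bookkeeping about finiteness and non-decreasing cost order already carried out for CBS-DL.
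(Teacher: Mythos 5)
Your proof is correct and follows essentially the same route as the paper's: the paper tracks the equivalent invariant that the set of agents a DT node has declared unsuccessful is a subset of $\gamma_{\emph{unsucc}}$ (the complement of your ``$S$ is still live''), performs the same three-way case split, and reaches the same contradiction with Lemma~\ref{lem:DBS:finite_nodes} via Lemma~\ref{lem:DBS:nondecreasing}. Your write-up merely makes explicit two details the paper leaves implicit, namely that an inconsistent live group must contain an agent outside $S$ (since subsets of the successful set are consistent) and that its presence already forces the parent's cost to be at most $x-1$.
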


\begin{proof}
A solution always exists, for example, where all agents are unsuccessful. Now assume that the cost of an optimal solution is $x$ and, for a proof by contradiction, that DBS does not terminate with a solution of cost $x$. Therefore, whenever DBS chooses a DT node with cost $x$ on Line \ref{line:DBS:choose_node}, its set does not contain one single consistent group (because otherwise DBS would correctly terminate with a solution of cost $x$ according to Lemma \ref{lem:DBS:terminate_finite} since it chooses DT nodes on Line \ref{line:DBS:choose_node} in non-decreasing order of their costs according to Lemma \ref{lem:DBS:nondecreasing}). Pick an arbitrary optimal solution with the set $\gamma_{\emph{unsucc}}$ of $x$ unsuccessful agents. Trivially, a DT node that has declared the agents in a subset of $\gamma_{\emph{unsucc}}$ unsuccessful has cost $\le x$. The root DT node is such a node since it has not declared any agents unsuccessful. Whenever DBS chooses such a DT node on Line \ref{line:DBS:choose_node}, its set does not contain one single consistent group (as shown directly above since its cost is $\le x$). Its set thus contains (1) more than one consistent group or (2) an inconsistent group (in which case the DT node has declared the agents in a {\em strict} subset of $\gamma_{\emph{unsucc}}$ unsuccessful). In case (1), DBS thus generates the only child DT node of this parent DT node, which has declared the same agents unsuccessful as the parent DT node and which DBS thus inserts into $\OPEN$ with cost $\le x$. In case (2), DBS thus generates the child DT nodes of this parent DT node, at least one of which has still declared the agents (including one additional agent) in a subset of $\gamma_{\emph{unsucc}}$ unsuccessful and which DBS thus inserts into $\OPEN$ with cost $\le x$. Since DBS chooses DT nodes on Line \ref{line:DBS:choose_node} in non-decreasing order of their costs according to Lemma \ref{lem:DBS:nondecreasing}, it chooses infinitely many DT nodes on Line \ref{line:DBS:choose_node} with costs $\le x$, which contradicts Lemma \ref{lem:DBS:finite_nodes}.
\end{proof}

\subsection{Meta-Agent DBS}
\label{sec:MA-DBS}

CBS may perform poorly when an environment contains many possible, but colliding, paths for
the agents since the size of CT is exponential in the number of collisions resolved. On the other hand, DBS may perform poorly for MAPF-DL if the
conflicting agents are not added to the same group early in the search. We thus combine the power of CBS for weakly coupled agents and the power of DBS
for identifying unsuccessful agents in a tightly coupled subset of agents using the Meta-Agent CBS \cite{DBLP:journals/ai/SharonSFS15} framework, which results in a new optimal MAPF-DL algorithm, called Meta-Agent DBS (MA-DBS).

MA-DBS is a two-level algorithm: It uses the high-level search of CBS-DL on the high level and DBS on the low level. Algorithm \ref{alg:CBS-high} shows its high-level search. MA-DBS is similar to CBS-DL but also keeps track of the number of times collisions between every pair of (simple) agents that it has considered thus far during the search in a conflict matrix $CM$. Before MA-DBS expands a CT node $N$ for the colliding agents $\agent{i}$ and $\agent{j}$, if the number of collisions between the two agents exceeds a user-defined \emph{merge threshold} $B$, MA-DBS merges them into a composite \emph{meta agent} $\agent{\{i, j\}}$. To do so, whenever MA-DBS considers a collision between (meta) agents $\agent{i}$ and $\agent{j}$ [Line \ref{line:CBS:new_collision}], because two simple agents $\agent{k} \in \agent{i}$ and $\agent{k'}\in \agent{j}$ collide, it increases the value of $CM[\{k,k'\}]$ by one. Function $\emph{shouldMerge}(\agent{i}, \agent{j})$ returns \emph{true} iff $\sum_{\agent{k}\in\agent{i}, \agent{k'}\in\agent{j}}CM[\{k,k'\}] > B$ [Line \ref{line:CBS:should_merge}]. Since DBS uses a low-level search that finds a plan for a meta agent without any internal collisions between all (simple) agents in the meta agent, it only needs to store external constraints resulting from (external) collisions between any two (simple) agents in different meta agents. Therefore, if MA-DBS decides to merge $\agent{i}$ and $\agent{j}$ into $\agent{\{i, j\}}$, it updates the constraints $N.\emph{constraints}$ of the CT node $N$ accordingly [Line \ref{line:CBS:meta_constraints}]. It then calls DBS to find new paths (without internal collisions) for all agents in $\agent{\{i, j\}}$ subject to the constraints in $N.\emph{constraints}$ relevant to $\agent{\{i, j\}}$ (by solving a MAPF-DL instance with agents in $\agent{\{i, j\}}$) and updates the plan $N.\emph{plan}$ and cost $N.\emph{cost}$ of the CT node $N$ according to the new paths returned by DBS [Lines \ref{line:CBS:meta_path}-\ref{line:CBS:meta_key}]. Then, instead of
expanding $N$, MA-DBS inserts $N$ back into OPEN [Line \ref{line:CBS:insert_node_back}]. When MA-DBS generates a new child CT node, it also calls DBS to find an optimal solution for a meta agent that obeys the constraints of the child CT node [Line \ref{line:CBS:call_low_level}].

\subsubsection{Theoretical Analysis}

Lemmas \ref{lem:finite_nodes} and \ref{lem:nondecreasing} hold for MA-DBS without change. Since the low-level search of MA-DBS, namely DBS, returns the maximum number of paths for a meta agent that obey the constraints of a CT node, Lemma \ref{lem:most_succ_agents} also holds for MA-DBS because (1), when it updates the plan of a CT node on Line \ref{line:CBS:meta_path}, the resulting plan contains the maximum number of paths for the new meta agent and the original paths of the other agents, and (2), when it updates the plan of a child CT node on Line \ref{line:CBS:call_low_level}, the resulting plan contains the maximum number of paths for the meta agent and inherits paths of other agents from the plan of the parent CT node, and thus the induction argument for the proof of Lemma \ref{lem:most_succ_agents} holds. Consequently, Lemma \ref{lem:nondecreasing} also holds for MA-DBS.

\begin{thm}
MA-DBS is complete and optimal.
\end{thm}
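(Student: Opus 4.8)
The plan is to replay the proof of Theorem~\ref{thm:CBS-DL} almost verbatim, after checking that every lemma it invokes survives the substitution of DBS for A* on the low level and the addition of the merge step. So the first task is bookkeeping: a solution always exists (all agents unsuccessful); Lemma~\ref{lem:finite_nodes} and Lemma~\ref{lem:nondecreasing} hold for MA-DBS (as discussed above); Lemma~\ref{lem:terminate_finite} holds because the cost of a CT node is still the number of unsuccessful agents in its plan, which is at most $M$; Lemma~\ref{lem:most_succ_agents} holds by the induction argument given above, with each low-level DBS call --- which terminates (DBS is complete) and returns a maximum-size set of paths for the meta agent with no internal collisions --- playing the role of the A* call; and Lemma~\ref{lem:key_upper_bound} then follows from Lemma~\ref{lem:most_succ_agents} exactly as before. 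The only conceptual care needed here is the meaning, for a CT node containing meta agents, of ``a plan (or solution) that obeys the node's constraints'': it is a plan whose paths of the simple agents satisfy the node's stored \emph{external} constraints and whose meta agents contain no internal collisions. With this reading DBS does return the largest such plan, so Lemma~\ref{lem:most_succ_agents} is still the right statement; and any actual solution --- which by definition is collision-free, internally and externally --- that satisfies the stored external constraints of a node does obey that node, which is the property the completeness argument will use.

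Next I would run the contradiction argument of Theorem~\ref{thm:CBS-DL} line for line. Suppose the optimal cost is $x$ and MA-DBS never terminates with a solution of cost $x$. Then every CT node that MA-DBS chooses on Line~\ref{line:CBS:choose_node} with cost $\le x$ must have a plan with a collision, since otherwise MA-DBS would return it as a solution of cost $\le x$, which by optimality of $x$ would be a solution of cost exactly $x$ (Lemma~\ref{lem:terminate_finite} and Lemma~\ref{lem:nondecreasing} guarantee this termination would actually occur). Fix an arbitrary optimal solution $\pi$. By Lemma~\ref{lem:key_upper_bound}, every CT node that $\pi$ obeys has cost $\le x$ (as $\pi$ is a solution of cost $x$), and the root CT node is such a node since its constraints are empty. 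Whenever MA-DBS chooses a CT node $N$ that $\pi$ obeys, $N$ has cost $\le x$ and hence $N.\emph{plan}$ has a collision, so MA-DBS does one of two things. Either it merges the two colliding (meta) agents and re-inserts the \emph{same} node $N$ into $\OPEN$; the updated constraints of $N$ are, after rewriting them to refer to the new meta agent, a subset of the old ones (the internal constraints of the new meta agent are dropped), so $\pi$ still obeys $N$, and by Lemma~\ref{lem:key_upper_bound} the updated cost of $N$ is still $\le x$. Or it generates the two child CT nodes, adding to each a vertex or edge constraint on one of the two colliding (meta) agents; since $\pi$ is collision-free, at least one of these constraints is obeyed by $\pi$ (for a vertex collision at most one of the two agents is at the colliding vertex at the colliding time step in $\pi$, and for an edge collision at most one of the two traversals occurs in $\pi$), and that child --- whose meta-agent partition is inherited from $N$ --- is then obeyed by $\pi$ and has cost $\le x$ by Lemma~\ref{lem:key_upper_bound}. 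In either case a CT node obeyed by $\pi$ with cost $\le x$ is placed into $\OPEN$, so by Lemma~\ref{lem:nondecreasing} MA-DBS chooses CT nodes obeyed by $\pi$ with cost $\le x$ infinitely often.

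The only step that is not a literal transcription of the CBS-DL proof --- and the one I expect to be the main obstacle --- is deriving the final contradiction with Lemma~\ref{lem:finite_nodes}. Unlike CBS-DL, MA-DBS can choose the \emph{same} CT node several times, once before each merge it performs on that node. But a CT node is re-inserted only when a merge occurs, and each merge strictly decreases the number of meta agents of that node, a quantity that lies between $1$ and $M$; hence each CT node is chosen at most $M$ times in total, after which it has either terminated MA-DBS or generated (distinct) child nodes. Combined with Lemma~\ref{lem:finite_nodes}, this bounds the total number of choices MA-DBS can ever make, contradicting the ``infinitely often'' conclusion of the previous paragraph. Therefore MA-DBS terminates with a solution of cost $x$, i.e., it is complete and optimal; everything else is the CBS-DL argument with ``external constraints'' and ``DBS low-level search'' substituted in the appropriate places.
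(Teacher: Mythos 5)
Your proposal is correct and follows essentially the same route as the paper: the paper's own proof simply observes that the merge operation can occur only finitely many times (bounded by $M$) per CT node since agents are only ever merged and never split, and then declares the rest identical to the proof of Theorem~\ref{thm:CBS-DL}, relying on the preceding discussion that Lemmas~\ref{lem:finite_nodes}--\ref{lem:key_upper_bound} carry over with DBS as the low-level search. You spell out the details the paper leaves implicit (in particular, that the optimal solution still obeys a node after its internal constraints are dropped by a merge, and how the per-node bound on merges combines with Lemma~\ref{lem:finite_nodes} to yield the contradiction), which is a faithful and more explicit rendering of the same argument.
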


\begin{proof}
MA-DBS only merges two agents into one agent on Line \ref{line:CBS:merge_agents} but never splits any agent. Therefore, MA-DBS does the merge operation [Lines \ref{line:CBS:merge_agents}-\ref{line:CBS:continue_while_statement}] finitely many times (bounded by $M$) for each CT node. The rest of the proof is the same as the proof of Theorem \ref{thm:CBS-DL}.
\end{proof}

\begin{figure}[t]
  \centering
  \includegraphics[width=.5\columnwidth]{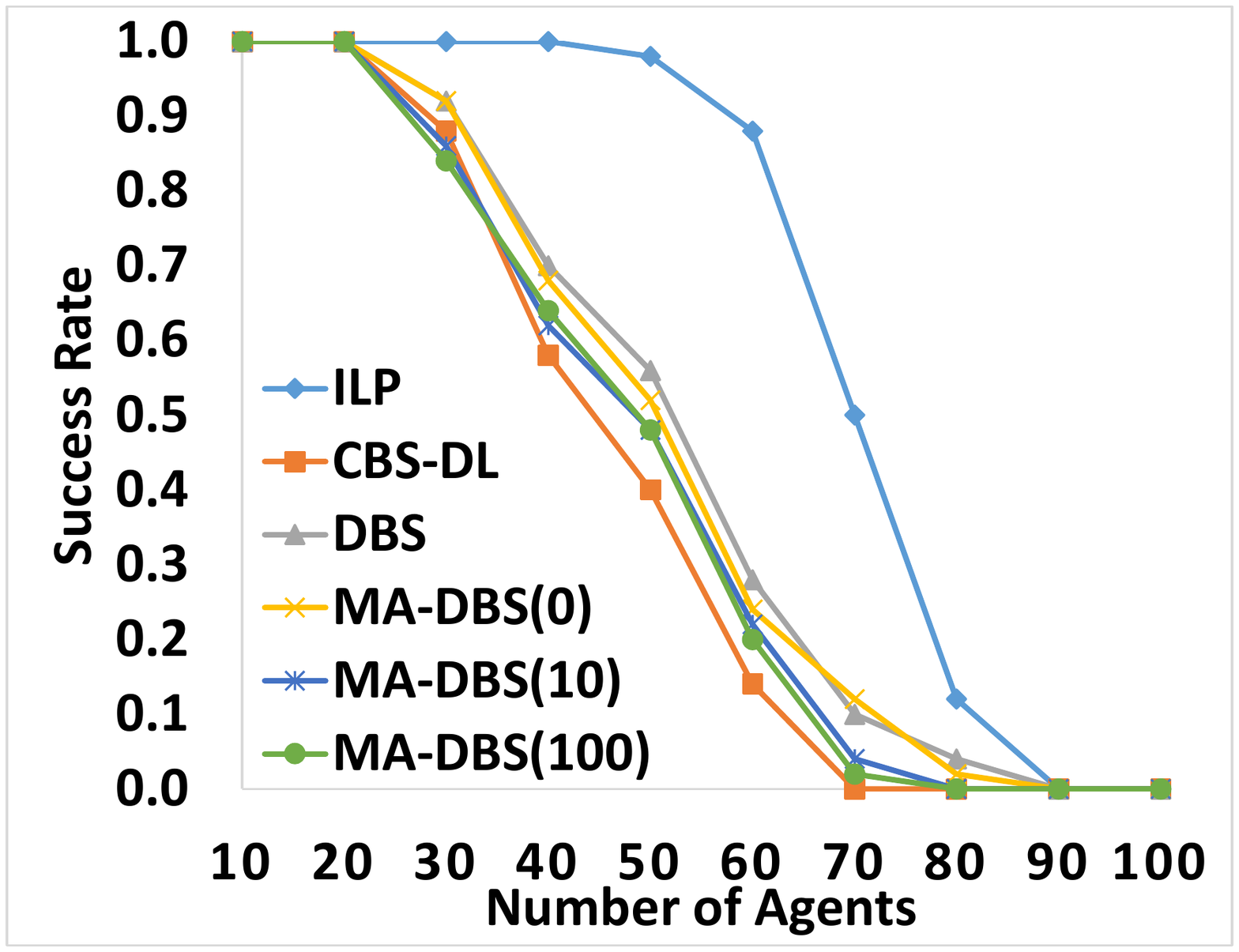}\includegraphics[width=.5\columnwidth]{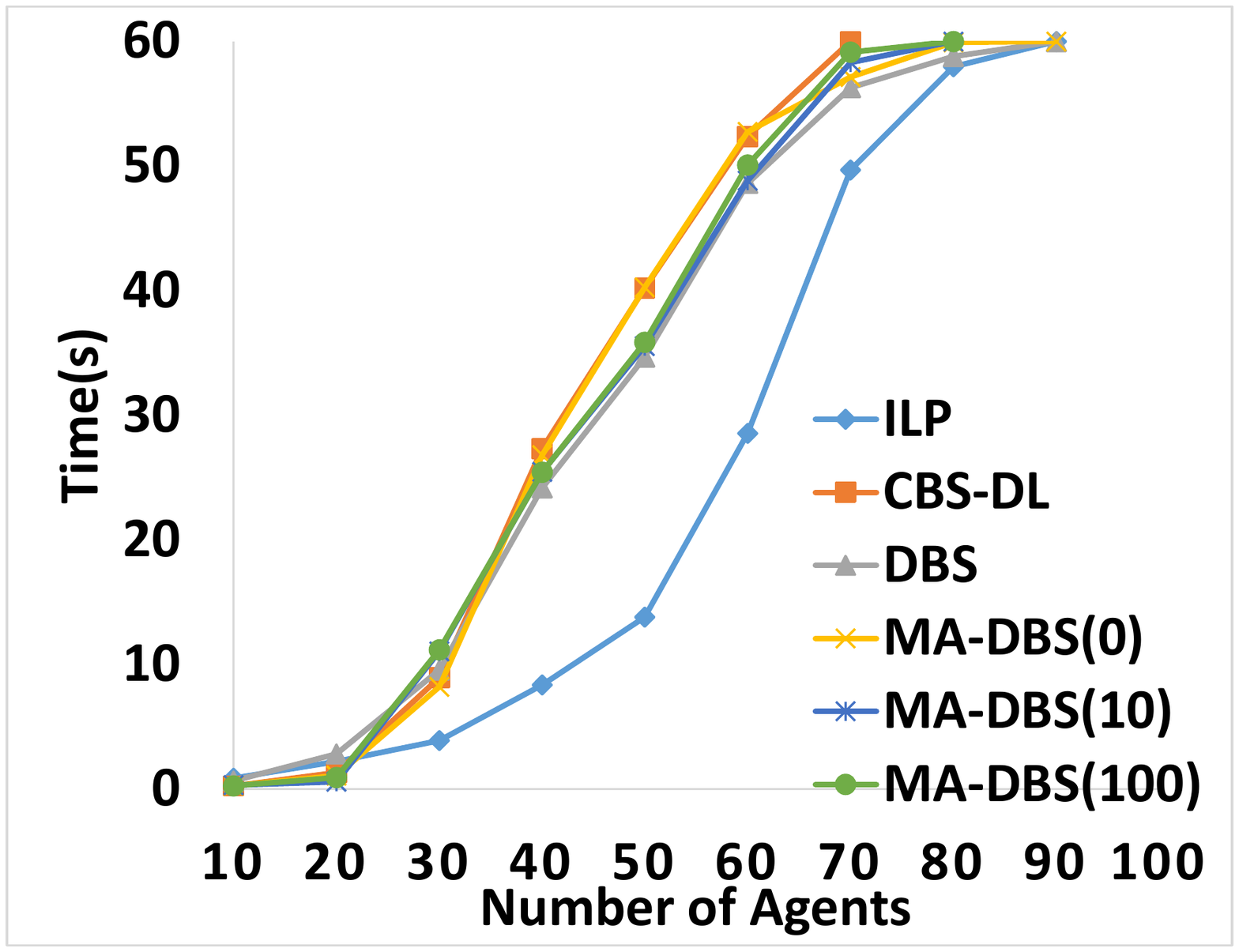}\\
  \Huge
  \resizebox{\columnwidth}{!}{
    \begin{tabular}{c|c|c|c|c|c|c|c}
    \hline
     agents & \#instances & ILP   & CBS-DL & DBS   & MA-DBS(0) & MA-DBS(10) & MA-DBS(100) \\
    \hline
    10    & 50    & 0.86  & \textbf{0.24} & 0.65  & 0.25  & 0.30  & 0.25 \\
    20    & 50    & 2.18  & 1.31  & 2.80  & 1.07  & {\bf 0.58}    & 0.91  \\
    30    & 42    & 3.76  & \textbf{1.57} & 4.24  & 3.23  & 1.76  & 1.85 \\
    40    & 26    & 6.74  & 3.02  & 7.16  & 8.54  & 2.82 & \textbf{2.35} \\
    50    & 14    & 11.88 & 9.23  & 11.76 & 19.43 & \textbf{4.82} & 5.46 \\
    60    & 5     & 29.26 & 4.82  & 10.82 & 27.07 & \textbf{4.77} & 5.88 \\
    \hline
    \end{tabular}%
    }
  \caption{Success rates (top left), averaged running times over all instances (top right), and averaged running times over instances solved by all six algorithms (bottom) for different numbers of agents.}
  \label{fig:S}%
\end{figure}

\section{Experiments}

%
%
%

In this section, we describe our experimental results on a 2.50 GHz Intel Core
i5-2450M laptop with 6 GB RAM. We tested six optimal MAPF-DL
algorithms: the ILP-based algorithm, CBS-DL, DBS, and MA-DBS with merge
thresholds 0, 10, and 100 (labeled as MA-DBS(0), MA-DBS(10), and MA-DBS(100), respectively). The ILP-based algorithm uses CPLEX V12.7.1 \cite{IBM2011} as the ILP solver. We experimented on instances where the start and goal vertices of each agent are placed randomly so that the distance between them is close to the deadline. An instance becomes much easier to solve if this distance is much smaller than the deadline (since there is more leeway to plan a path for the agent). Specifically, we use three sets of randomly generated MAPF-DL instances with different numbers of agents (varied from 10 to 100 in increments of 10) labeled as SMALL, MEDIUM, and LARGE on $40\times40$, $80\times80$, and $120\times120$ 4-neighbor 2-D grids with deadlines $\tcrit =$ 50, 100, and 150, respectively. The cells in each grid are blocked independently at random with 20\% probability each. We generate 50 MAPF-DL instances for each number of agents for each set. The start and goal vertices of each agent are randomly placed at distance 48, 49, or 50 for SMALL, 98, 99, or 100 for MEDIUM, and 148, 149, or 150 for LARGE. Each algorithm is given a time limit of 60 seconds to solve each instance. We did not run an algorithm for some number of agents if it solved none of the 50 instances for a smaller number of agents.

\noindent \textbf{The $40\times40$ SMALL domain}
Figure~\ref{fig:S} (top left) plots the success rates (numbers of instances solved within the time limit divided by 50) for all algorithms for SMALL. ILP has the highest success rates, and they start to drop only at 50 agents. The success rates for the search-based algorithms start to drop at 30 agents. DBS and MA-DBS(0) have the highest success rates among all search-based algorithms.
Figure~\ref{fig:S} (top right) plots the average running times over \textbf{all} 50 instances. 60 seconds are used for an instance that is not solved. Therefore, the data points in the chart are lower bounds on the running times in those cases when not all instances are solved. ILP performs the best. Finally, the table in Figure~\ref{fig:S} reports the average running times over those ``easy'' instances that are \textbf{solved by all six algorithms} (it also reports the numbers of those instances but does not show the rows where no instance is solved by all the algorithms). The best entry in each row is shown in bold. The search-based algorithms use less time to solve these ``easy'' instances than ILP. CBS-DL, MA-DBS(10), and MA-DBS(100) seem to use the least times and outperform ILP by up to a factor of 6. In some cases, the running times are smaller for larger numbers of agents because fewer (and ``easier'') instances are solved by all algorithms.

\begin{figure}[t]
  \centering
  \includegraphics[width=.5\columnwidth]{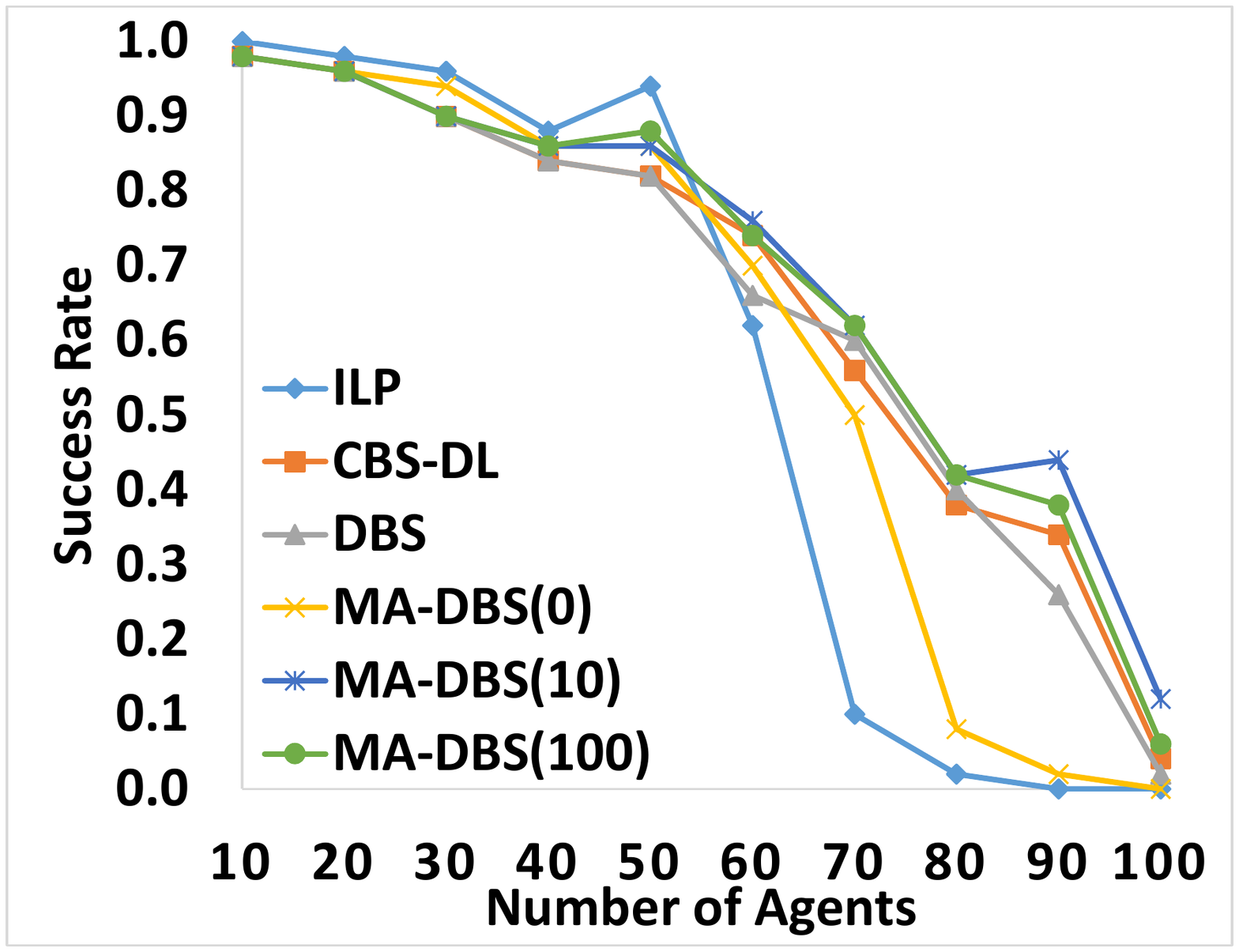}\includegraphics[width=.5\columnwidth]{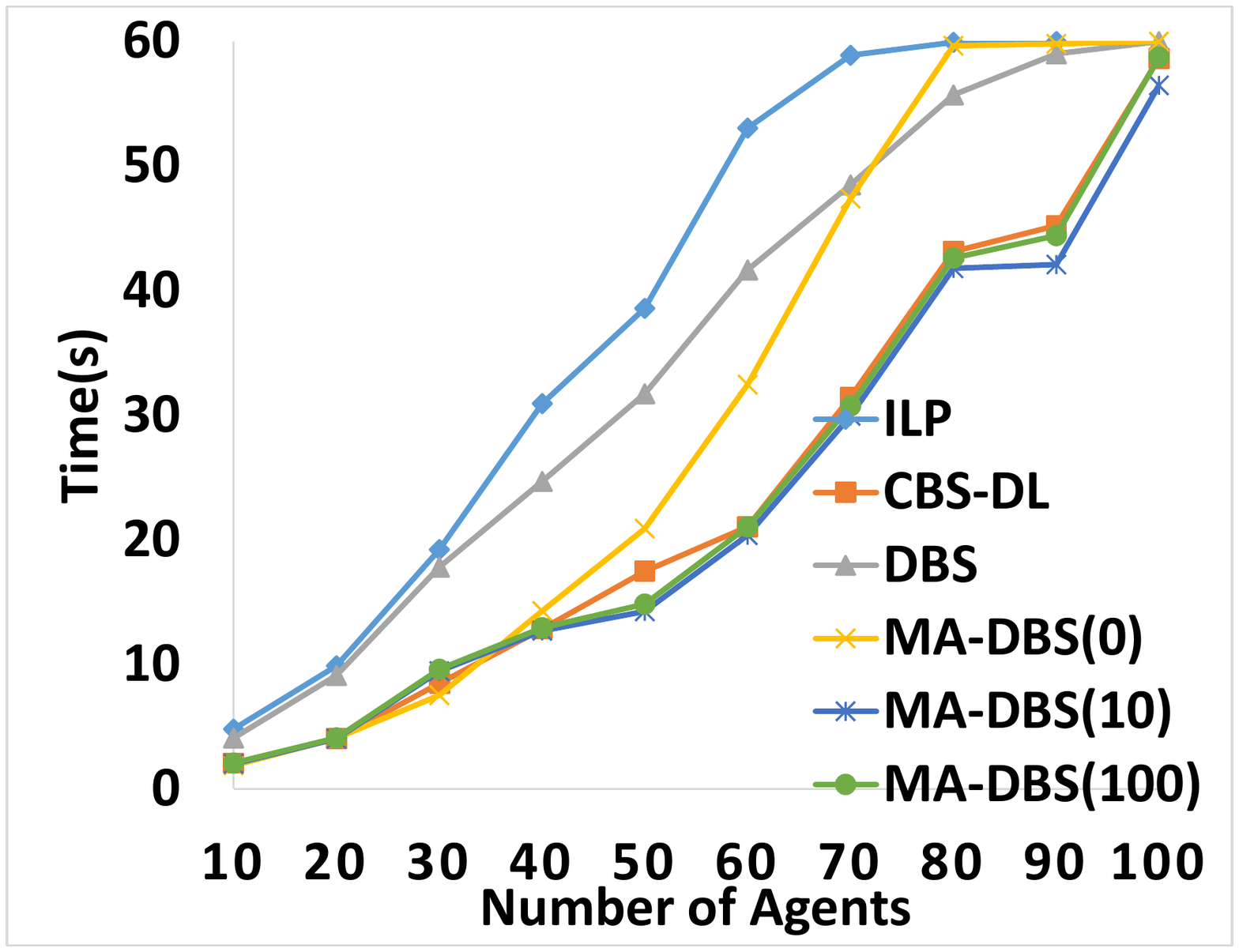}
  \Huge
  \resizebox{\columnwidth}{!}{
    \begin{tabular}{c|c|c|c|c|c|c|c}
    \hline
    agents & \#instances & ILP   & CBS-DL & DBS   & MA-DBS(0) & MA-DBS(10) & MA-DBS(100) \\
    \hline
    10    & 49    & 3.80  & 0.89  & 2.95  & \textbf{0.69} & 0.82  & 0.90 \\
    20    & 48    & 8.87  & \textbf{1.71} & 6.97  & 1.75  & \textbf{1.71} & 1.75 \\
    30    & 43    & 17.56 & \textbf{2.65} & 12.19 & 3.01  & 2.86  & 3.02 \\
    40    & 42    & 26.92 & \textbf{3.84} & 17.96 & 6.04  & 4.08  & 4.24 \\
    50    & 37    & 37.46 & \textbf{5.92} & 25.57 & 14.33 & 6.36  & 6.12 \\
    60    & 22    & 48.80 & \textbf{7.26} & 31.89 & 21.31 & 7.29  & 7.33 \\
    70    & 3     & 47.76 & 8.70  & 35.06 & 26.37 & \textbf{7.74} & 7.97 \\
    \hline
    \end{tabular}%
    }
  \caption{Results for MEDIUM.}
  \label{fig:M}%
\end{figure}

\begin{figure}[t]
  \centering
  \includegraphics[width=.5\columnwidth]{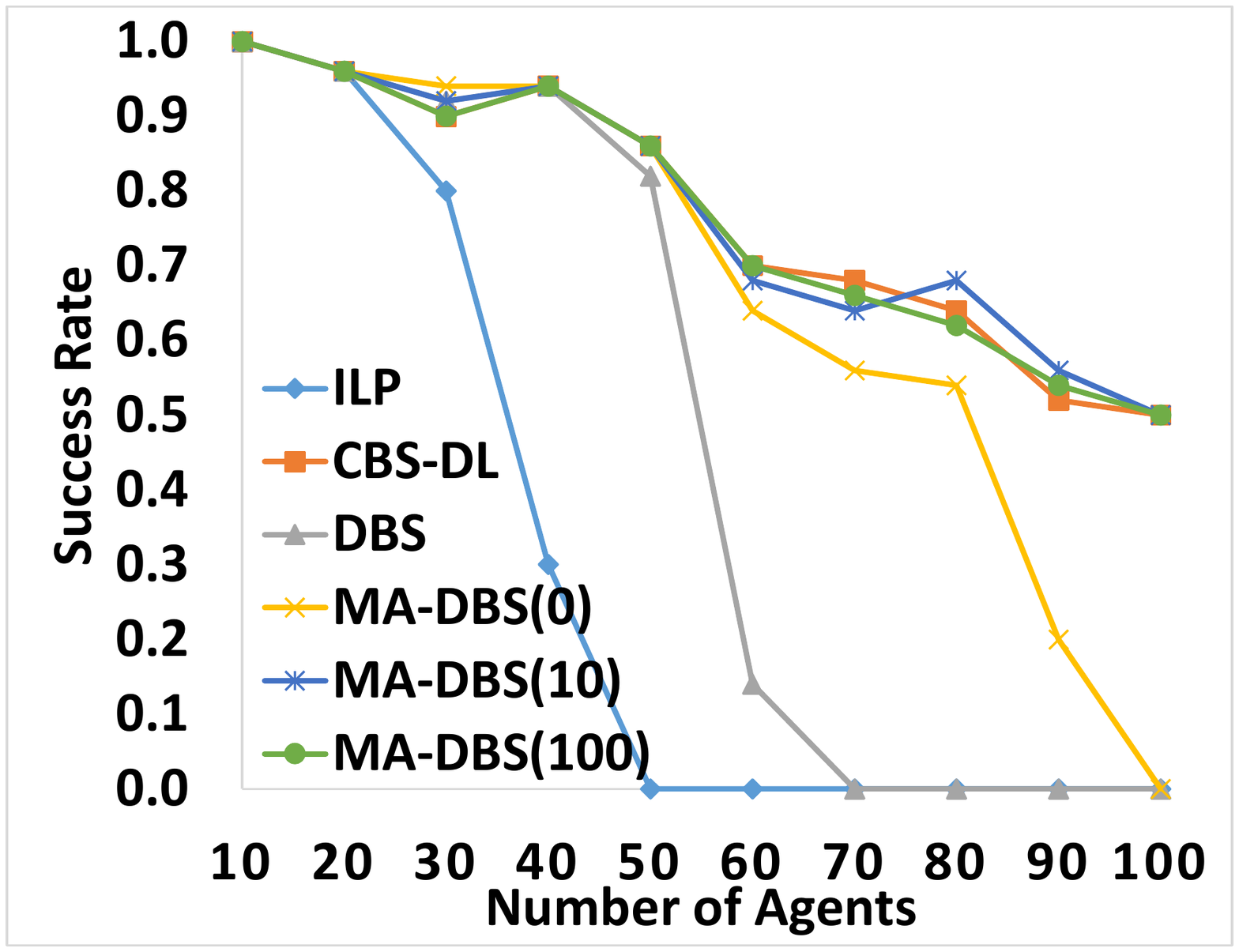}\includegraphics[width=.5\columnwidth]{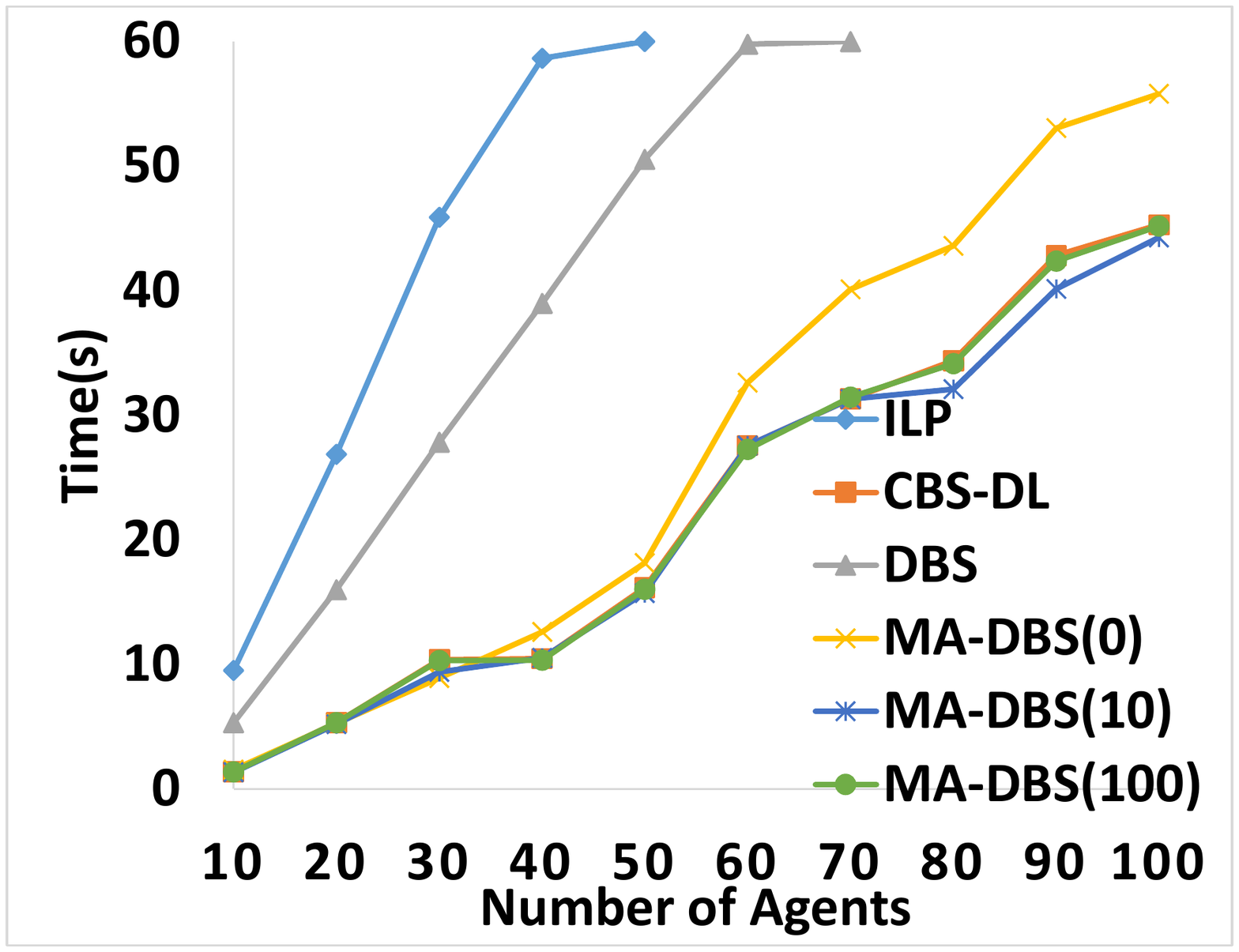}
  \Huge
  \resizebox{\columnwidth}{!}{
    \begin{tabular}{c|c|c|c|c|c|c|c}
    \hline
    agents & \#instances & ILP   & CBS-DL & DBS   & MA-DBS(0) & MA-DBS(10) & MA-DBS(100) \\
    \hline
    10    & 50    & 9.52  & 1.38  & 5.31  & 1.55  & \textbf{1.33} & 1.36 \\
    20    & 48    & 25.49 & 3.04  & 14.15 & 2.98  & \textbf{2.91} & 3.03 \\
    30    & 38    & 42.43 & 4.84  & 24.88 & 5.67  & 4.86  & \textbf{4.80} \\
    40    & 15    & 55.44 & 7.20  & 37.23 & 8.61  & 6.88  & \textbf{6.67} \\
    \hline
    \end{tabular}%
    }
  \caption{Results for LARGE.}
  \label{fig:L}%
\end{figure}

\noindent \textbf{The $80\times80$ MEDIUM domain}
Figure~\ref{fig:M} reports the same statistics for MEDIUM in the same format as reported for SMALL. Figure~\ref{fig:M} (top left) plots the success rates. ILP has the highest success rates for small numbers ($\leq 50$) of agents but the lowest success rates for large numbers of agents. MA-DBS(10) seems to perform the best for large numbers of agents. Figure~\ref{fig:M} (top right) plots the average running times over \textbf{all} 50 instances. ILP has the longest running times. MA-DBS(10) seems to perform the best in general. Finally, the table in Figure~\ref{fig:M} reports the average running times over instances that are \textbf{solved by all six algorithms}. ILP performs the worst. CBS-DL seems to have the smallest running times and outperforms ILP by up to a factor of 7.

\noindent \textbf{The $120\times120$ LARGE domain}
Figure~\ref{fig:L} reports the same statistics for LARGE in the same format as reported for SMALL.
Figure~\ref{fig:L} (top left) plots the success rates. CBS-DL, MA-DBS(10), and MA-DBS(100) have the best success rates. ILP has the worst success rates. Figure~\ref{fig:L} (top right) plots the average running times over
\textbf{all} 50 instances. MA-DBS(10) seems to perform the best. ILP performs the worst. The table in Figure~\ref{fig:L} reports the average
running times over instances that are \textbf{solved by all six algorithms}. MA-DBS(10) and CBS-DL perform the best (very close to each other) and outperform ILP by up to a factor of 9.

\noindent \textbf{Summary of Experimental Results}
For the same number of agents, SMALL has higher agent density, more tightly-coupled agents, and shorter planning horizons than MEDIUM and LARGE. ILP outperforms the search-based algorithms for SMALL because the size of the ILP formulation is small. When $\tcrit$ increases, the size of the ILP formulation and the running time required to solve it increase significantly.

On the other hand, among all search-based algorithms, there seems to be a spectrum where DBS and CBS-DL sit at two extremes. DBS has higher success rates than CBS-DL for SMALL. CBS-DL has significantly higher success rates than DBS for MEDIUM and LARGE. CBS-DL uses much less times than DBS for instances that are solved by all algorithms. MA-DBS seems to balance between CBS-DL and DBS: (a) MA-DBS(0) is more similar to DBS because it merges agents into meta agents more frequently, which can result in a large meta agent containing many agents that need to be solved by DBS on the low level; and, on the other hand, (b) MA-DBS(10) and MA-DBS(100) are more similar to CBS-DL because they merge agents less frequently and their searches mostly remain in the CBS-DL framework.


\section{Conclusions and Future Work}

We formalized MAPF-DL, a new variant of MAPF. Theoretically, we proved that MAPF-DL is NP-hard to solve optimally. We presented two families of optimal MAPF-DL algorithms, one based on an ILP formulation and one based on combinatorial search techniques. Our experimental results show that each of them performs the best in different scenarios.
We suggest the following future directions: (1) develop and compare new MAPF-DL algorithms, for example, A*-, ASP-, and SAT-based algorithms; (2) study important generalizations of MAPF-DL (for example, when agents have different priorities) more deeply; (3) study the combinatorial difference between MAPF-DL and MAPF; and (4) explore different merge criteria for MA-DBS.

{
\small
\bibliographystyle{named}
\bibliography{references}
}

\end{document}